\documentclass[10pt]{article} 

\usepackage[preprint]{rlj} 

\usepackage{amssymb,amsthm}
\usepackage{mathtools}
\usepackage{mathrsfs}
\usepackage{graphicx}
\usepackage{subcaption}
\usepackage[space]{grffile}
\usepackage{algorithm,algorithmic}
\usepackage{tikz}
\usepackage{booktabs}
\usepackage{multirow}
\usepackage{enumitem}
\usetikzlibrary{positioning,arrows.meta,fit,calc,decorations.pathreplacing}

\definecolor{cSignal}{HTML}{2166AC}   
\definecolor{cGrad}{HTML}{8856A7}     
\definecolor{cExpert}{HTML}{117733}   
\definecolor{cWaste}{HTML}{B2182B}    
\definecolor{cConsensus}{HTML}{088F8F} 

\definecolor{cIncumbent}{HTML}{7B68EE}   
\definecolor{cBatch}{HTML}{E8871E}       
\definecolor{cExpert}{HTML}{117733}      
\definecolor{cOut}{HTML}{D6604D}         

\newtheorem{theorem}{Theorem}

\newtheorem{proposition}[theorem]{Proposition}
\newtheorem{corollary}[theorem]{Corollary}
\newtheorem{definition}{Definition}
\newtheorem{remark}{Remark}

\title{Optimize Wider, Not Deeper:\\ Consensus Aggregation for Policy Optimization}

\setrunningtitle{Optimize Wider, Not Deeper}

\author{Zelal Su (Lain) Mustafaoglu$^{1}$, Sungyoung Lee$^{2}$, Eshan Balachandar$^{1}$, Risto Miikkulainen$^{1}$, Keshav Pingali$^{1}$}

\emails{zsm@utexas.edu, sylee@utexas.edu, eshan@cs.utexas.edu, risto@cs.utexas.edu, pingali@cs.utexas.edu}{
}

\affiliations{
$^{1}$\textbf{Department of Computer Science, University of Texas at Austin}
$^{2}$\textbf{Department of Electrical and Computer Engineering, University of Texas at Austin}
}

\contribution{
    A Fisher-geometric decomposition of the PPO update into signal (natural gradient projection) and waste (Fisher-orthogonal residual). Signal saturates within a few epochs while waste accumulates, formalizing the optimization-depth dilemma.
    }
    {
    The decomposition explains why increasing epoch count yields diminishing returns and why averaging PPO copies improves KL efficiency: shared signal is preserved while path-dependent waste partially cancels.
    }

\contribution{
    CAPO (Consensus Aggregation for Policy Optimization) algorithm: runs the PPO inner loop $K$ times independently on the same batch and aggregates them into a consensus, either in parameter space or in the natural parameter space of the policy distribution.
    }
    {
    In natural parameter space, the consensus provably achieves higher KL-penalized surrogate and tighter trust region compliance than the mean expert (Theorem~\ref{thm:consensus}). The natural-parameter consensus is exactly the LogOP, which outperforms Euclidean averaging when expert variances diverge.
    }

\contribution{
    Empirical validation across continuous control benchmarks in Gymnasium, providing empirical insights into signal--waste decomposition and demonstrating improvements over PPO and compute-matched baselines.
    }
    {
    CAPO outperforms PPO on five out of six tasks by up to $8.6\!\times$, while PPO with $K\!\times$ more epochs degrades on every task due to waste accumulation. Parameter averaging reduces waste by $2$--$17\%$, across all tasks; aggregation in natural parameter space (LogOP) achieves $46\%$ waste reduction on Humanoid, where precision-weighting across more parameters amplifies the benefit due to the high dimensionality of the task.
    }

\keywords{RL, On-Policy RL, Policy Optimization, Trust Region Methods, Proximal Policy Optimization (PPO), Natural Gradients, Information Geometry, Continuous Control}

\summary{Proximal policy optimization (PPO) approximates the trust region update using multiple epochs of clipped SGD. Each epoch may drift further from the natural gradient direction, creating path-dependent noise. To understand this drift, we can use Fisher information geometry to decompose policy updates into \emph{signal} (the natural gradient projection) and \emph{waste} (the Fisher-orthogonal residual that consumes trust region budget without first-order surrogate improvement). Empirically, signal saturates but waste grows with additional epochs, creating an \emph{optimization-depth dilemma}. We propose \textbf{Consensus Aggregation for Policy Optimization (CAPO)}, which redirects compute from depth to width: $K$ PPO replicates are optimized on the same batch, differing only in minibatch shuffling order, and then aggregated into a consensus. We study aggregation in two spaces: Euclidean parameter space, and the natural parameter space of the policy distribution via the logarithmic opinion pool. In natural parameter space, the consensus provably achieves higher KL-penalized surrogate and tighter trust region compliance than the mean expert; parameter averaging inherits these guarantees approximately. On continuous control tasks, CAPO outperforms PPO and compute-matched deeper baselines under fixed sample budgets by up to $8.6\!\times$. CAPO demonstrates that policy optimization can be improved by optimizing wider, rather than deeper, without additional environment interactions.
}

\begin{document}

\makeCover  
\maketitle  

\begin{abstract}
Proximal policy optimization (PPO) approximates the trust region update using multiple epochs of clipped SGD. Each epoch may drift further from the natural gradient direction, creating path-dependent noise. To understand this drift, we can use Fisher information geometry to decompose policy updates into \emph{signal} (the natural gradient projection) and \emph{waste} (the Fisher-orthogonal residual that consumes trust region budget without first-order surrogate improvement). Empirically, signal saturates but waste grows with additional epochs, creating an \emph{optimization-depth dilemma}. We propose \textbf{Consensus Aggregation for Policy Optimization (CAPO)}, which redirects compute from depth to width: $K$ PPO replicates are optimized on the same batch, differing only in minibatch shuffling order, and then aggregated into a consensus. We study aggregation in two spaces: Euclidean parameter space, and the natural parameter space of the policy distribution via the logarithmic opinion pool. In natural parameter space, the consensus provably achieves higher KL-penalized surrogate and tighter trust region compliance than the mean expert; parameter averaging inherits these guarantees approximately. On continuous control tasks, CAPO outperforms PPO and compute-matched deeper baselines under fixed sample budgets by up to $8.6\!\times$. CAPO demonstrates that policy optimization can be improved by optimizing wider, rather than deeper, without additional environment interactions.
\end{abstract}

\section{Introduction}
\label{sec:intro}

\begin{figure}[t]
\centering
\begin{subfigure}[b]{0.48\linewidth}
\centering
\resizebox{\linewidth}{!}{%
\begin{tikzpicture}[
  vec/.style   = {-{Latex[length=2.6mm]}, line width=0.9pt},
  thickvec/.style = {-{Latex[length=3mm]}, line width=1.2pt},
  lab/.style   = {font=\small},
  slab/.style  = {font=\scriptsize},
]

\coordinate (O) at (0, 0);

\coordinate (E1) at (-3.2, 3.5);   
\coordinate (E2) at ( 2.8, 4.8);   

\coordinate (S1) at (0, 3.5);
\coordinate (S2) at (0, 4.8);

\coordinate (AVG) at (-0.35, 4.15);

\draw[thick, black!50] (0, 2.5) ellipse (4.5cm and 3.8cm);
\node[black!60, fill=white, inner sep=2pt, anchor=center, font=\normalsize] at (-3.2, 6.2) {%
  $\mathrm{KL}(\pi_t \| \pi) \leq \delta$};

\draw[thickvec, cSignal] (O) -- (0, 5.7);
\node[lab, cSignal, anchor=south east] at (-0.1, 5.7) {$F^{-1}g$};

\draw[thickvec, cGrad] (O) -- (3.5, 1.9);
\node[lab, cGrad, anchor=west] at (3.55, 1.9) {$g$};

\draw[vec, cExpert, opacity=0.35] (O) -- (E1);
\node[slab, cExpert!85!black, anchor=south east]
  at ($(O)!0.38!(E1) + (-0.05, 0.05)$) {$\Delta^{1}$};

\draw[vec, cWaste, densely dashed, line width=1pt] (S1) -- (E1);
\node[slab, cWaste, anchor=south]
  at ($(S1)!0.50!(E1) + (0, 0.12)$) {waste$^{1}$};

\draw[black!45, line width=0.5pt]
  ($(S1)+(-0.3, 0)$) -- ++(0, -0.3) -- ++(0.3, 0);

\fill[cSignal] (S1) circle (2.8pt);
\node[slab, cSignal!80!black, anchor=west] at ($(S1)+(0.18, 0)$) {signal$^{1}$};

\fill[cExpert] (E1) circle (3pt);
\node[cExpert!80!black, font=\small\bfseries, anchor=south east]
  at ($(E1)+(-0.12, 0.12)$) {$\theta^{1}$};

\draw[vec, cExpert, opacity=0.35] (O) -- (E2);
\node[slab, cExpert!85!black, anchor=south west]
  at ($(O)!0.38!(E2) + (0.05, 0.05)$) {$\Delta^{2}$};

\draw[vec, cWaste, densely dashed, line width=1pt] (S2) -- (E2);
\node[slab, cWaste, anchor=south]
  at ($(S2)!0.50!(E2) + (0, 0.12)$) {waste$^{2}$};

\draw[black!45, line width=0.5pt]
  ($(S2)+(0.3, 0)$) -- ++(0, -0.3) -- ++(-0.3, 0);

\fill[cSignal] (S2) circle (2.8pt);
\node[slab, cSignal!80!black, anchor=east] at ($(S2)+(-0.18, 0)$) {signal$^{2}$};

\fill[cExpert] (E2) circle (3pt);
\node[cExpert!80!black, font=\small\bfseries, anchor=west]
  at ($(E2)+(0.18, 0)$) {$\theta^{2}$};

\draw[thickvec, cConsensus, line width=2pt] (O) -- (AVG);
\fill[white] (AVG) circle (5pt);              
\fill[cConsensus!80!black] (AVG) circle (3.5pt);
\node[cConsensus!75!black, font=\small\bfseries, anchor=east]
  at ($(AVG)+(-0.18, 0)$) {$\theta^{\mathrm{agg}}$};

\fill[black] (O) circle (2.2pt);
\node[lab, anchor=north] at ($(O)+(0, -0.15)$) {$\theta_t$};

\end{tikzpicture}%
}
\caption{\textbf{Signal--waste decomposition in the trust region.}}
\label{fig:signal-waste}
\end{subfigure}
\hfill
\begin{subfigure}[b]{0.48\linewidth}
\centering
\includegraphics[width=\linewidth]{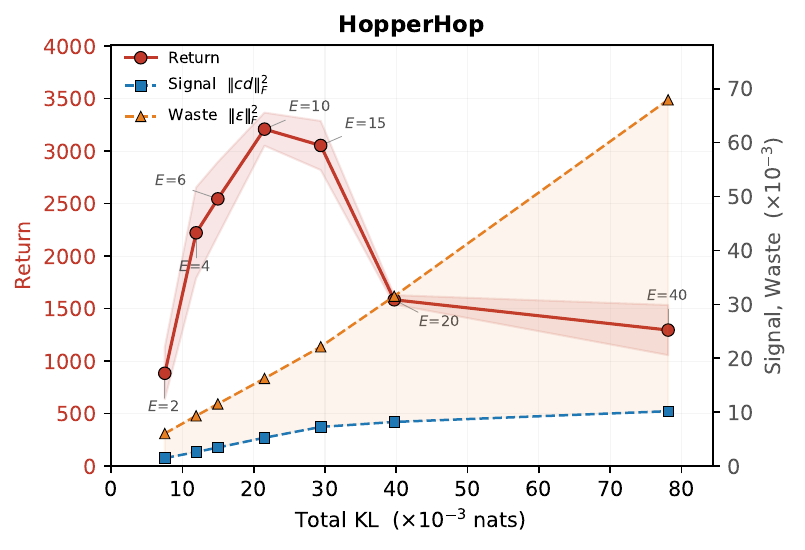}
\caption{\textbf{Return, signal, waste vs.\ total KL of the final PPO update.} (Hopper, 3 seeds). Return (red, left axis) peaks at $E\!=\!10$ then collapses; signal KL (blue) saturates while waste KL (orange) grows $11\!\times$.}
\label{fig:surr-kl-pareto}
\end{subfigure}%
\caption{\textbf{Geometry of CAPO updates.}
\textbf{(a)}~ REINFORCE moves along the gradient $g$ ({\color{cGrad}solid}) (approximately), while TRPO moves along the natural gradient $F^{-1}g$ (approximately). Each PPO expert's move $\Delta^k$ decomposes into a \emph{signal} ({\color{cSignal}$\bullet$}\,, projection onto $F^{-1}g$) and
\emph{waste} ({\color{cWaste}dashed}, Fisher-orthogonal residual).
In this diagram, waste points left for expert $\theta^1$ and right for expert $\theta^2$ so averaging reduces waste, and the consensus $\theta^{agg}$ lies closer to $F^{-1}g$ with lower KL than either expert
(Theorem~\ref{thm:consensus}).
\textbf{(b)}~Return and signal--waste KL decomposition of the final PPO update for varying epoch counts $E$ on Hopper, mean over 3 seeds.
Return peaks at $E\!=\!10$ then collapses as waste overwhelms the trust region budget.}
\label{fig:geometry}
\end{figure}

Policy gradient methods are used in reinforcement learning to train policies for tasks in problem domains ranging from robotics~\citep{rudin2022learning} to large language model alignment~\citep{ouyang2022training}. A neural network called the {\em policy network} is used to map each state $s$ to a distribution $\pi(\cdot |s)$ over actions, and the distribution is sampled to determine the next action for the agent. The objective of training is to find values for policy network parameters $\theta^{*}$ that maximize the expected total reward. 

Like most complex nonlinear optimization problems, this problem has to be solved using heuristic search. The initial point $\theta_0$ in the parameter space is usually chosen heuristically, and some rule is used iteratively to move from the current search point $\theta_t$ to the next search point $\theta_{t{+}1}$ (by convention, a subscript always refer to a timestep but we omit the subscript when it is obvious from the context). It is useful to think of this move as choosing a {\em direction} followed by a {\em step} in that direction. The move can be guided by the value of the function (and gradient, if available) at the previous and current time-steps. The notation $\pi_t$ refers to the policy induced by the policy network parameters $\theta_t$ at step $t$.

The classic REINFORCE method~\citep{sutton1998introduction} uses a Monte Carlo estimate of the gradient at $\theta_t$, and moves in the direction of the gradient with a step size determined by a heuristic learning rate. In Figure~\ref{fig:signal-waste}, this direction is shown conceptually by the line labeled $g$. Although easy to implement, the gradient estimates have high variance so moves made by REINFORCE are noisy. 

This problem led to the development of {\em trust region methods} like TRPO~\citep{schulman2015trpo}, which introduces explicit proximity constraints on updates to determine step sizes in a more principled way. TRPO requires the expectation of the KL-divergence $D_{KL}(\pi_{t}(s),\pi_{t{+}1}(s))$ over all states $s$ 
(written as $\overline{KL}(\pi_{t},\pi_{t{+}1})$), to be bounded, making policy optimization a {\em constrained} optimization problem. Following \cite{amari2000methods},
\cite{kakade2001naturalpg}  showed that the optimal direction for this problem is the {\em natural gradient} $F^{-1}g$ where $F$ is the Fisher information matrix and $g$ is the gradient. Intuitively, the natural gradient is the direction of steepest improvement per unit of KL-divergence cost, and is shown by the line labeled $F^{-1}g$ in Figure~\ref{fig:geometry}(a). TRPO computes this direction approximately using the conjugate gradient method followed by a backtracking line search, which is computationally expensive. In contrast, PPO~\citep{schulman2017ppo}  trades trust region optimality for efficiency and reduced complexity by approximating the update with $E$ epochs of clipped minibatch SGD, with $M$ minibatches per epoch. This simplification comes at a cost: each additional epoch may drift further from the natural gradient direction. As a result, PPO is a {\em noisy estimator} of the KL-constrained natural gradient step. 

Using Fisher information geometry, we can decompose any update into a \emph{signal} component, which is the projection onto the natural gradient, and an orthogonal \emph{waste} component, as shown in Figure~\ref{fig:signal-waste} for two updates $\theta^1$ and $\theta^2$ (projection is made more precise in Section~\ref{sec:depth-dilemma} using Fisher inner product). Correspondingly, the KL cost of the update splits into signal KL and waste KL (Proposition~\ref{prop:separation}).  
Figure~\ref{fig:surr-kl-pareto} shows the KL cost decomposition for the \textit{final} PPO update\footnote{To be precise, this is the update from $\theta_{f{-}1}$ to $\theta_f$ where $\theta_f$ is the final search point.} for Hopper, a standard control benchmark. Each point is PPO using a different epoch count $E$; the $x$-axis is the total KL cost. Return (red, left axis) peaks at $E\!=\!10$ and collapses beyond it. The right axis decomposes total KL cost into signal KL, (blue, $\|cd\|_F^2$) and waste KL (orange, $\|\epsilon\|_F^2$). Signal KL saturates after a few epochs, but waste KL grows $11\!\times$ from $E\!=\!2$ to $E\!=\!40$, consuming the trust region budget without contributing to the surrogate gain. 

The classical remedy for reducing noise is to average estimates. \textbf{CAPO} (Consensus Aggregation for Policy Optimization) does exactly this: given a fixed batch, it runs $K$ copies of the PPO optimizer, with the same data and incumbent but different minibatch orderings, and aggregates them into a single consensus policy. The aggregation can be performed in parameter space or in the natural parameter space of the policy distribution via the logarithmic opinion pool. In both cases, waste partially cancels while signal is preserved, as shown by the point labeled $\theta^{agg}$ in Figure~\ref{fig:signal-waste}. 

Our contributions are:
\begin{enumerate}
    \item \textbf{Signal--waste decomposition} (Section~\ref{sec:depth-dilemma}): A Fisher-geometric decomposition of the noisy  PPO update that formalizes the optimization-depth dilemma and explains why compute-matched optimization (PPO-$K\!\times$) degrades.
    \item \textbf{CAPO: a consensus operator for trust region policy optimization}   (Section~\ref{sec:method}). $K$ PPO copies on the same batch, aggregated into a consensus that provably improves on the mean individual expert in both surrogate value and trust region compliance (Theorem~\ref{thm:consensus}). 
    \item \textbf{Empirical validation} (Section~\ref{sec:experiments}): 
    \begin{itemize}
    \item CAPO (LogOP) outperforms PPO and compute-matched baselines on five out of six Gymnasium continuous control benchmarks by up to $+71\%$ on low-dimensional tasks and $8.6\!\times$ on Humanoid, a high-dimensional task. 
    \item Parameter averaging reduces waste by $2$--$17\%$, across all tasks; aggregation in natural parameter space (LogOP) achieves $46\%$ waste reduction on Humanoid, where precision-weighting across more parameters amplifies the benefit due to the high dimensionality of the task.
    \item The only overhead is $K\!\times$ gradient computation, which is embarrassingly parallel and requires no additional environment interaction. As a result, end-to-end time increases only by about 25\% on the average even for $K=4$ (Section~\ref{app:timing}). 
    \end{itemize}
\end{enumerate}

\section{Preliminaries}
\label{sec:prelims}

\label{sec:trpo}
This section describes TRPO and PPO formally, and motivates CAPO. 

\subsection{Trust Region Policy Optimization}

The KL-constrained trust region policy optimization step can be described concisely as follows: if the current policy is $\pi_t$, find $\pi_{t{+}1}$ that maximizes the expectation shown below, subject to the bound on the expected KL-divergence between $\pi_t$ and $\pi_{t{+}1}$. 

\begin{equation}\label{eq:ideal}
    \pi_{t{+}1} = \mathsf{U}^*_\delta(\pi_t, \mathcal{B}) = \arg\max_\pi \; \hat{\mathbb{E}}_{s,a \sim \mathcal{B}}\!\left[\frac{\pi(a|s)}{\pi_t(a|s)} \hat{A}^{\pi_t}(s,a)\right] \quad \text{s.t.} \quad \overline{\mathrm{KL}}(\pi_t \| \pi) \leq \delta.
\end{equation}

Here, $\mathcal{B}$ is the batch collected by the current policy $\pi_t$,  the trust region bound is $\delta$, and $\hat{A}$ are advantages for (state, action) pairs estimated via generalized advantage estimation (GAE) \citep{schulman2016gae}. Computing $\mathsf{U}^*_\delta$ exactly is intractable, so this optimization problem is solved approximately. 


TRPO~\citep{schulman2015trpo} solves a quadratic approximation (the surrogate) to~\eqref{eq:ideal} using the Fisher information matrix $F(\theta) = \mathbb{E}_{s,a\sim\pi_\theta}[\nabla\!\log\pi_\theta(a|s)\,\nabla\!\log\pi_\theta(a|s)^\top]$, which captures the local curvature of the policy's distribution space. The KL constraint becomes $\frac{1}{2}\Delta^\top\! F\,\Delta \leq \delta$, where $\Delta$ is the update, yielding the natural gradient direction $\Delta^* \propto F^{-1}g$. TRPO computes the update direction via conjugate gradient, and then performs a line search. 

PPO~\citep{schulman2017ppo} replaces the KL constraint with a clipped surrogate:
\begin{equation}\label{eq:ppo}
    L^{\text{CLIP}}(\theta) = \hat{\mathbb{E}}_t\!\left[\min\!\left(\rho_t(\theta) \hat{A}_t,\; \text{clip}(\rho_t(\theta), 1\!-\!\epsilon, 1\!+\!\epsilon) \hat{A}_t\right)\right],
\end{equation}
where $\rho_t(\theta) = \pi(a_t|s_t)/\pi_t(a_t|s_t)$ is the importance sampling ratio and $\epsilon$ is the clip threshold. PPO runs $E$ epochs of minibatch SGD ($M$ minibatches per epoch, yielding $EM$ sequential gradient steps). This is computationally cheaper than TRPO but only approximately enforces the trust region: clipping truncates large likelihood ratios but does not bound KL divergence. The gap between TRPO's geometrically principled update and PPO's approximation is what CAPO exploits.

\subsection{The Optimization-Depth Dilemma}
\label{sec:depth-dilemma}

To determine much of PPO's update is wasted, it is useful to compute the Fisher signal--waste decomposition, defined formally below.

\begin{definition}[Fisher signal--waste decomposition]\label{def:decomp}
Let $\hat{d} = F^{-1}g\,/\,\|F^{-1}g\|_F$ be the unit natural gradient direction in the Fisher norm $\|v\|_F = \sqrt{v^\top\! F\, v}$. Any update $\Delta = (\theta {-} \theta_t)$ decomposes uniquely as
\begin{equation}\label{eq:decomp}
  \Delta \;=\; \underbrace{c\, \hat{d}}_{\text{signal}}
             \;+\; \underbrace{\epsilon\vphantom{\hat{d}}}_{\text{waste}},
  \qquad
  c = \langle \Delta, \hat{d}\rangle_F, \qquad
  \langle \epsilon, \hat{d}\rangle_F = 0,
\end{equation}
where $\langle u,v\rangle_F = u^\top\! F\,v$ is the Fisher inner product and $c$ is the projection on the natural gradient. 
\end{definition}

The consensus update decomposes as $\bar{\Delta} = \bar{c}\,\hat{d} + \bar{\epsilon}$ with $\bar{c} = \frac{1}{K}\sum_k c^k$ and $\bar{\epsilon} = \frac{1}{K}\sum_k \epsilon^k$.

To keep the notation simple, we use $c$ and $\epsilon$ for projections and residuals of both the updates and the total KL since the context and type always makes it clear which one is being referred to. Signal KL and waste KL are measured in KL units ($\times 10^{-3}$ nats). We define \textit{Fisher alignment} $\alpha = c^2/(c^2 + \|\epsilon\|_F^2)$ to measure the fraction of the total KL budget spent on the natural gradient direction. \emph{Fisher cosine similarity} $\cos_F(\Delta) = \langle \Delta, \hat{d}\rangle_F / \|\Delta\|_F$ measures directional alignment.

\begin{proposition}[Signal--waste separation]\label{prop:separation}
Under decomposition~\eqref{eq:decomp}:
\begin{enumerate}[label=(\roman*),nosep]
  \item \textbf{First-order surrogate improvement depends only on signal.}\;
        $g^\top \Delta = c\,\|F^{-1}g\|_F$.\;
        The waste $\epsilon$ contributes nothing.
  \item \textbf{KL cost is additive.}\;
        $\tfrac{1}{2}\Delta^\top F\,\Delta
         = \tfrac{1}{2}c^2 + \tfrac{1}{2}\|\epsilon\|_F^2$.\;
        The signal costs $\tfrac{1}{2}c^2$ in KL;
        the waste costs $\tfrac{1}{2}\|\epsilon\|_F^2$ in KL with no first-order return.
\end{enumerate}
\end{proposition}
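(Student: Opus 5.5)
The plan is direct linear algebra in the Fisher inner product $\langle u,v\rangle_F = u^\top F v$. I will assume $F$ is symmetric positive definite (at least on the span of the updates considered), so $F^{-1}g$ exists and $\|F^{-1}g\|_F>0$ whenever $g\neq 0$.

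First I will verify that decomposition~\eqref{eq:decomp} is well defined and unique. Set $c=\langle\Delta,\hat d\rangle_F$ and $\epsilon=\Delta-c\hat d$. Since $\|\hat d\|_F=1$, we get $\langle\epsilon,\hat d\rangle_F = c - c = 0$. For uniqueness, suppose $c'\hat d+\epsilon'$ is another such split. Taking the Fisher inner product with $\hat d$ forces $c'=c$, and then $\epsilon'=\epsilon$.

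\textbf{Part (i).} The key observation is that the Euclidean pairing with $g$ equals a Fisher pairing with the natural gradient: for any $v$,
\[
g^\top v = (F F^{-1} g)^\top v = \langle F^{-1}g, v\rangle_F ,
\]
which uses the symmetry of $F$. Writing $F^{-1}g=\|F^{-1}g\|_F\,\hat d$ gives
\[
g^\top\Delta = \|F^{-1}g\|_F\bigl(c\,\langle\hat d,\hat d\rangle_F + \langle\hat d,\epsilon\rangle_F\bigr) = c\,\|F^{-1}g\|_F .
\]
In particular $g^\top\epsilon = \|F^{-1}g\|_F\,\langle\hat d,\epsilon\rangle_F = 0$, so the waste contributes nothing to first-order surrogate improvement.

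\textbf{Part (ii).} This is the Pythagorean identity in the Fisher norm. Expanding,
\[
\Delta^\top F\Delta = c^2\|\hat d\|_F^2 + 2c\langle\hat d,\epsilon\rangle_F + \|\epsilon\|_F^2 = c^2 + \|\epsilon\|_F^2 ,
\]
and halving gives the stated split of the quadratic KL approximation. The identification of $\tfrac12\Delta^\top F\Delta$ with KL cost is the second-order expansion of $\overline{\mathrm{KL}}(\pi_t\|\pi)$ recalled in the Preliminaries. The proposition is stated in terms of that quadratic model, so no higher-order remainder needs to be controlled.

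There is no real obstacle. The only points needing care are stating the positive-definiteness assumption (or restricting to the range of $F$ and using a pseudo-inverse) and excluding the degenerate case $g=0$, where $\hat d$ is undefined.
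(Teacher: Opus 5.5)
Your proposal is correct and follows the paper's proof: the paper writes $g = \|F^{-1}g\|_F\,F\hat{d}$ (equivalent to your identity $g^\top v = \langle F^{-1}g, v\rangle_F$) and gets (i) from $\langle \hat{d},\hat{d}\rangle_F = 1$ and $\langle \hat{d},\epsilon\rangle_F = 0$, then gets (ii) from the Pythagorean identity in the Fisher norm. Your uniqueness check for the decomposition and your nondegeneracy assumptions ($F \succ 0$, $g \neq 0$) are details the paper leaves implicit.
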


\begin{proof}
Using $g = \|F^{-1}g\|_F* F\hat{d}\,$ and $\Delta_k = c_k\hat{d} + \epsilon_k$:

\noindent\textbf{(i)} $g^\top\!\Delta_k = c_k\|F^{-1}g\|_F\,\underbrace{\hat{d}^\top\!F\,\hat{d}}_{1} + \|F^{-1}g\|_F\,\underbrace{\hat{d}^\top\!F\,\epsilon_k}_{0} = c_k\|F^{-1}g\|_F$.

\noindent\textbf{(ii)} By Pythagoras's theorem in the Fisher norm (\cite{amari2000methods}): $\Delta_k^\top\!F\,\Delta_k = c_k^2 + \|\epsilon_k\|_F^2$ (cross-term vanishes by orthogonality).\qedhere
\end{proof}

In theory, TRPO has $\epsilon = 0$ by construction since it follows the natural gradient $F^{-1}g$. PPO experts have $\epsilon \neq 0$. 

Table~\ref{tab:signal-saturation-body} tracks signal KL and waste KL as $E$ varies on the Gymnasium Hopper task with $(64,64)$ networks\footnote{Figure~\ref{fig:surr-kl-pareto} showed this data pictorially.}. At $E\!=\!4$, alignment is $\alpha = 0.22$: only $22\%$ of the KL budget produces surrogate improvement; the remaining $78\%$ is waste. Increasing depth ($E$) does not help; waste grows $11\!\times$ from $E\!=\!2$ to $E\!=\!40$, alignment never exceeds $0.25$, and the returns peak at $E\!\approx\!10$ before collapsing. The dilemma is clear: PPO cannot spend its KL budget efficiently no matter how many epochs it runs.

\begin{table}[h!]
\centering
\caption{\textbf{Signal KL and waste KL vs.\ epoch count.} Fisher decomposition of the final PPO update at varying epoch counts. Hopper, Gymnasium, $(64,64)$ network, 3 seeds. Signal $c^2$ and waste $\|\epsilon\|_F^2$ in $10^{-3}$ KL-nats.}
\label{tab:signal-saturation-body}
\small
\begin{tabular}{@{}lccccccc@{}}
\toprule
\textbf{Epochs $E$} & \textbf{Return} & \textbf{Signal $c^2$} & \textbf{Waste $\|\epsilon\|_F^2$} & \textbf{Total KL} & \textbf{$\alpha$} & \textbf{$\cos_F$} \\
\midrule
2  & $884 \pm 304$  & 1.5  & 6.0   & 7.5   & 0.20 & 0.43 \\
4  & $2222 \pm 529$ & 2.6  & 9.3   & 11.9  & 0.22 & 0.46 \\
6  & $2546 \pm 430$ & 3.4  & 11.5  & 15.0  & 0.23 & 0.47 \\
10 & $\mathbf{3210 \pm 193}$ & 5.3  & 16.3  & 21.5  & 0.24 & 0.49 \\
15 & $3054 \pm 286$ & 7.2  & 22.1  & 29.4  & \textbf{0.25} & 0.49 \\
20 & $1584 \pm 55$  & 8.2  & 31.5  & 39.7  & 0.21 & 0.45 \\
40 & $1296 \pm 292$ & 10.1 & 67.9  & 78.0  & 0.13 & 0.35 \\
\bottomrule
\end{tabular}
\end{table}

\subsection{Motivation for CAPO}
\label{sec:noisy-estimator}

{\em This limitation of PPO motivates replacing depth with width: rather than running one optimizer like PPO for $KE$ epochs, run $K$ experts for $E$ epochs each and average the results; waste partially cancels while signal is preserved.} The remaining question is how to create the $K$ experts. Since collecting samples through environment interactions is the performance bottleneck, we cannot afford to collect an on-policy batch for each expert.

CAPO exploits the following insight. Abstractly, the output of an approximate optimizer like PPO is a \emph{random variable}; different minibatch orderings, initializations, and accumulation paths yield different final policies. 
This randomness is not negligible. \citet{henderson2018deep} show that implementation details and random seeds dominate RL outcomes, and policy gradient estimates are known to be high-variance~\citep{ilyas2020closer}. 

CAPO focuses on one specific source of randomness: the minibatch ordering across PPO's $E$ epochs of SGD. Unlike TRPO, PPO's output depends on the minibatch ordering: given the same batch $\mathcal{B}$ and incumbent $\pi_t$, different shuffles produce different policies. As SGD processes minibatches sequentially, these orthogonal gradient components accumulate into the waste vector. Each minibatch ordering accumulates a different sequence of gradient steps, producing substantially different waste vectors (Section~\ref{sec:diagnostics}, Supplementary~\ref{app:fisher-diagnostics}), yielding different experts.

Section~\ref{sec:method} develops this idea.

\section{CAPO: Consensus Aggregation for Policy Optimization}
\label{sec:method}

Although we focus on PPO in this paper, the idea behind CAPO is quite general and can be used with other trust region methods. 
Given an optimizer $U$ and a current policy $\pi_t$, CAPO collects a batch of data $\mathcal{B}$ under $\pi_t$, and then generates $K$ CAPO experts $\pi^1, ...\pi^K$ with copies of the current incumbent policy parameters $\theta_t$ and Adam optimizer state~\citep{kingma2014adam}, but using different minibatch orderings. 

This approach ensures that any variation across experts is due to the optimization path, not the data. The $K$ expert policies are then aggregated into a consensus $\pi^{agg}$, which serves as the next policy $\pi_{t+1}$. The incumbent's optimizer state is carried forward unchanged (while there are other design choices, this default worked well in our experiments, as shown in Supplementary~\ref{sec:opt-carry}). This process is repeated for multiple generations until convergence or the sample budget is exhausted. Figure~\ref{fig:pipeline} shows this dataflow.

\begin{figure}[t]
\centering
\resizebox{\linewidth}{!}{%
\begin{tikzpicture}[
  stage/.style={
    draw=black!60, rounded corners=4pt, align=center,
    inner sep=6pt, minimum height=10mm,
    font=\small, fill=black!3,
  },
  expbox/.style={
    draw=cExpert!80, rounded corners=3pt, align=center,
    inner sep=5pt, minimum height=9mm, minimum width=40mm,
    font=\small, fill=cExpert!8,
  },
  aggbox/.style={
    draw=cConsensus!80, rounded corners=4pt, align=center,
    inner sep=6pt, minimum height=10mm,
    font=\small, fill=cConsensus!8,
  },
  outbox/.style={
    draw=cOut!80, rounded corners=4pt, align=center,
    inner sep=6pt, minimum height=10mm,
    font=\small, fill=cOut!8,
  },
  arr/.style={-{Latex[length=2.8mm]}, thick, black!65},
  fanarr/.style={-{Latex[length=2.4mm]}, semithick, black!50},
  ann/.style={font=\scriptsize, text=black!55},
]

\node[stage, draw=cIncumbent!80, fill=cIncumbent!12] (pi) {$\pi_t$};

\node[draw=cBatch!60, rounded corners=3pt, fill=cBatch!6,
      align=center, inner sep=4pt, font=\scriptsize,
      right=12mm of pi] (batch)
  {$\mathcal{B} \!\sim\! \pi_t$\\[-1pt] on-policy batch};

\node[expbox, right=28mm of batch, yshift=14mm] (e1)
  {\scriptsize PPO($E$ epochs, seed\,1) $\;\to\; \pi^1$};
\node[expbox, right=28mm of batch, yshift=0mm] (e2)
  {\scriptsize PPO($E$ epochs, seed\,2) $\;\to\; \pi^2$};
\node[right=28mm of batch, yshift=-7.5mm, font=\large, text=black!45] (dots) {$\vdots$};
\node[expbox, right=28mm of batch, yshift=-16mm] (eK)
  {\scriptsize PPO($E$ epochs, seed\,$K$) $\;\to\; \pi^K$};

\node[aggbox, right=20mm of e2, xshift=14mm] (agg)
  {Aggregate\\[-1pt] \scriptsize Param-Avg $\mid$ LogOP};

\node[outbox, right=12mm of agg] (out) {$\pi_{t+1}=\pi^{agg}$};

\draw[arr, densely dashed, black!50] (pi) -- (batch);

\draw[fanarr, densely dashed] (batch.east) -- (e1.west);
\draw[fanarr, densely dashed] (batch.east) -- (e2.west);
\draw[fanarr, densely dashed] (batch.east) -- (eK.west);

\draw[fanarr] (e1.east) -- (agg.west);
\draw[fanarr] (e2.east) -- (agg.west);
\draw[fanarr] (eK.east) -- (agg.west);

\draw[arr] (agg) -- (out);

\draw[decorate, decoration={brace, amplitude=5pt, mirror}, thick, black!45]
  ([yshift=-3mm, xshift=-2mm]eK.south west) -- ([yshift=-3mm, xshift=2mm]eK.south east)
  node[midway, below=5pt, ann] {same data, independent seeds};

\coordinate (loopR) at ([yshift=-35mm]out.south);
\coordinate (loopL) at ([yshift=-35mm]pi.south);
\draw[-{Latex[length=2.8mm]}, thick, black!85]
  (out.south) -- (loopR) -- (loopL) -- (pi.south);
\node[font=\scriptsize, text=black!85] at ($(loopR)!0.5!(loopL)$) [below=1pt] {repeat for $N$ generations};

\node[ann, above=2mm of e1] {$K$ independent optimizer runs};

\end{tikzpicture}%
}
\caption{\textbf{CAPO pipeline.} One on-policy batch is collected from the incumbent $\pi_t$ and fed to $K$ independent PPO copies that differ only in minibatch shuffle order. Expert policies are aggregated into a consensus $\pi_{t+1}$ in parameter space (avg) or distribution space (LogOP).}
\label{fig:pipeline}
\end{figure}
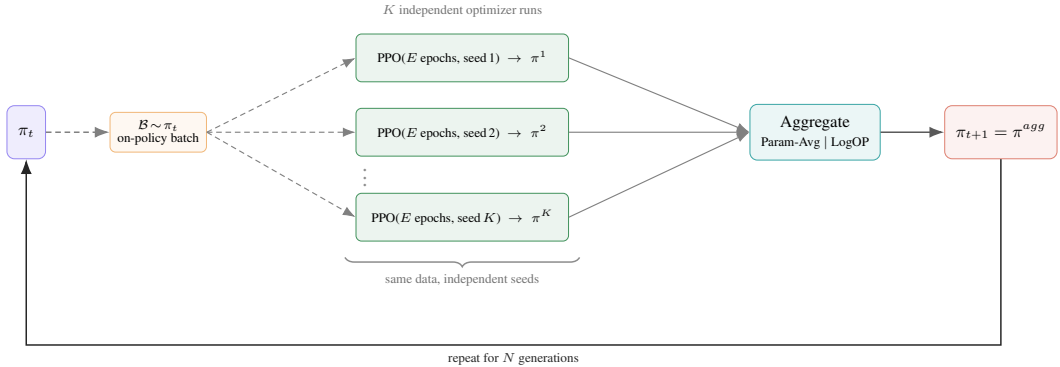

The remaining design choice is the choice of space in which to average. The simplest option is Euclidean parameter space (CAPO-Avg):
\begin{equation}\label{eq:param-avg}
    \theta_{t+1} = \theta_t + \frac{1}{K}\sum_{k=1}^K (\theta^k - \theta_t).
\end{equation}
Alternatively, one can aggregate in distribution space via the logarithmic opinion pool (LogOP)~\citep{genest1986combining, heskes1997logopinion}, which averages expert policies in the natural parameter space:
\begin{equation}\label{eq:logop}
    q(a|s) \;\propto\; \prod_{k=1}^K \pi^k(a|s)^{1/K}.
\end{equation}

For any exponential family, this product stays in the same family: if each expert has natural parameters $\eta^k$, then $q$ has natural parameters $\bar\eta = \frac{1}{K}\sum_k \eta^k$ (the average across the $K$ experts), so $q$ is a proper, closed-form distribution. For diagonal Gaussian experts $\pi^k(\cdot|s) = \mathcal{N}(\mu^k,\mathrm{diag}({\sigma^k}^2))$, this gives a Gaussian consensus with \emph{precision-weighted} parameters:
\begin{equation}\label{eq:precision-weighting}
    \mu_q = \Big(\textstyle\sum_k \tfrac{1}{{\sigma^k}^2}\Big)^{\!-1} \textstyle\sum_k \tfrac{\mu^k}{{\sigma^k}^2}, \qquad
    \sigma_q^{-2} = \tfrac{1}{K}\textstyle\sum_k {\sigma^k}^{-2},
\end{equation}
where all operations are element-wise over action dimensions. An expert with lower variance (higher precision $\sigma^{-2}$) on an action dimension pulls the consensus mean more strongly on that dimension: the LogOP automatically upweights confident experts per dimension, unlike parameter averaging which weights all experts equally. Since $q$ is a per-state analytic distribution, not a neural network, it is distilled into $\pi_{t+1}$ by minimizing $\mathrm{KL}(q \| \pi_{t+1})$ with a line search that rejects any step where $\mathrm{KL}(\pi_t \| \pi_{t+1}) > \delta$. We refer to this variant as \textbf{CAPO} and treat it as the default algorithm.

The two consensus rules obviously coincide when expert variances are identical. Although experts usually stay close to $\pi_t$ within the trust region, there is a small but non-zero gap. Empirically, LogOP outperforms parameter averaging on five of six tasks (Section~\ref{sec:main-results}).

\begin{algorithm}[t]
\caption{CAPO: Consensus Aggregation for Policy Optimization}
\label{alg:capo}
\begin{algorithmic}[1]
\REQUIRE Budget $T$, experts $K$, PPO epochs $E$
\STATE Initialize policy $\pi_0$, value function $V_0$, optimizer state $\phi_0$
\STATE Optional warmup: standard PPO until return exceeds floor
\FOR{generation $g = 0, 1, 2, \ldots$ until budget exhausted}
    \STATE Collect batch $\mathcal{B} \sim \pi_t$ ($C$ frames); compute $\hat{A}$ via GAE($\lambda$)
    \FOR{$k = 1, \ldots, K$ \textbf{in parallel}}
        \STATE $\theta^k \leftarrow \theta_t$, $\phi^k \leftarrow \phi_t$ \hfill\COMMENT{Copy incumbent}
        \STATE Run $E$ PPO epochs with shuffle seed $k$ $\to \theta^k$
    \ENDFOR
    \STATE $\theta_{t+1} \leftarrow \mathrm{Aggregate}(\theta^1, \ldots, \theta^K)$ \hfill\COMMENT{Avg or LogOP}
    \STATE $\phi_{t+1} \leftarrow \phi_t$ \hfill\COMMENT{Carry incumbent optimizer state}
    \STATE Update value function $V$ on $\mathcal{B}$ \hfill\COMMENT{Trained once, not replicated}
\ENDFOR
\end{algorithmic}
\end{algorithm}

Pseudocode for CAPO is provided in Algorithm~\ref{alg:capo}. 
The computational overhead is $K\!\times$ gradient computation per generation, no additional environment interaction, and $1\!\times$ inference cost (Table~\ref{tab:compute}).

\begin{table}[h!]
\centering
\caption{\textbf{Compute comparison.} CAPO trades parallel gradient computation for reduced update variance, without additional environment frames or inference cost.}
\label{tab:compute}
\small
\begin{tabular}{@{}lcccc@{}}
\toprule
\textbf{Method} & \textbf{Env.\ Frames} & \textbf{Grad.\ Steps} & \textbf{Inference} & \textbf{Update Quality} \\
\midrule
PPO & $C$ & $EM$ & $1\times$ & Baseline \\
PPO-$K\!\times$ (more epochs) & $C$ & $K \!\cdot\! EM$ & $1\times$ & \emph{Degrades} \\
PPO-SWA & $C$ & $EM$ & $1\times$ & Moderate \\
Best-of-$K$ & $C$ & $K \!\cdot\! EM$ & $1\times$ & Selection only \\
Ensemble Methods & $C$ & $K \!\cdot\! EM$ & $K\times$ & Moderate \\
\textbf{CAPO (Ours)} & $C$ & $K \!\cdot\! EM$ & $1\times$ & \textbf{Best} \\
\bottomrule
\end{tabular}
\end{table}




\paragraph{Budget allocation.}
\label{sec:budget} 
A fraction $f_w$ of the total step budget $C$ is spent on PPO warmup to initialize the incumbent. The remaining $(1 {-} f_w)C$ steps are divided across $G$ generations, each collecting one on-policy batch of $B$ frames and running $K$ expert updates in parallel on that batch. Because experts share the same batch, the per-expert sample budget is $(1 {-} f_w)C / G$, identical to single-expert PPO, while gradient compute scales as $K\!\times$. The warmup fraction trades initial policy quality against the number of aggregation generations; we ablate this tradeoff in Section~\ref{sec:warmup-ablation}, and default to 10\% in our experiments. 

\paragraph{Expert diversity.}
\label{sec:diversity} 
In our current configuration, the sole source of expert diversity is minibatch ordering: each expert receives a different shuffle seed, producing different minibatch sequences through the same data. Optionally, experts can also use different GAE $\lambda$ values, giving each expert a different advantage estimate and thus a different optimization target. We leave this for future work.



\section{Consensus Improvement Theorem}
\label{sec:consensus-improvement}
 
Theorem~\ref{thm:consensus} formalizes the benefit of consensus aggregation. We state it in natural parameter space $\eta$ where the KL bound is exact (exponential families generalize Gaussians). 

\begin{theorem}[Consensus improvement in natural parameter space]\label{thm:consensus}
Let $\pi_t$ belong to a minimal exponential family with natural parameters $\eta_t$ and log-partition function $A(\eta)$. Let $\eta^1, \ldots, \eta^K$ be expert natural parameters and $\bar{\eta} = \frac{1}{K}\sum_k \eta^k$. Then:
\begin{enumerate}[label=(\alph*),nosep]
  \item \textbf{Averaging preserves expected improvement.}\;
        The linearized surrogate $\hat{L}_{\mathrm{lin}}(\eta) = g_\eta^\top\!(\eta - \eta_t)$, where $g_\eta = \nabla_\eta L^{\mathrm{surr}}|_{\eta_t}$, is linear in $\eta$. Therefore $\hat{L}_{\mathrm{lin}}(\bar{\eta}) = \frac{1}{K}\sum_k \hat{L}_{\mathrm{lin}}(\eta^k)$.

  \item \textbf{Averaging reduces KL cost (exactly).}\;
        $\mathrm{KL}(\pi_t \| \pi_\eta) = D_A(\eta, \eta_t)$ is convex in $\eta$ since $\nabla^2 A(\eta) = F(\eta) \succeq 0$. Jensen's inequality gives:
        \[
          \mathrm{KL}(\pi_t \| \pi_{\bar\eta}) \;\leq\; \frac{1}{K}\sum_k \mathrm{KL}(\pi_t \| \pi_{\eta^k}).
        \]
        Unlike the quadratic Fisher approximation $\frac{1}{2}\Delta^\top\!F\,\Delta$, this is exact.

  \item \textbf{Better improvement per unit of KL.}\;
        For any $\lambda > 0$, the KL-penalized surrogate $J_\lambda(\eta) = \hat{L}_{\mathrm{lin}}(\eta) - \lambda\,\mathrm{KL}(\pi_t \| \pi_\eta)$ satisfies
        \[
          J_\lambda(\bar{\eta})
          \;\geq\; \frac{1}{K}\sum_k J_\lambda(\eta^k).
        \]

  \item \textbf{Trust region compliance.}\;
        The sublevel set $\mathcal{C}_\delta = \{\eta : \mathrm{KL}(\pi_t \| \pi_\eta) \leq \delta\}$ is convex. If each $\eta_k \in \mathcal{C}_\delta$, then $\bar\eta \in \mathcal{C}_\delta$.
\end{enumerate}
\end{theorem}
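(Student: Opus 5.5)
The plan is to reduce all four parts to two structural facts: the linearized surrogate is affine in $\eta$, and $\mathrm{KL}(\pi_t\|\pi_\eta)$ is a convex function of $\eta$. Once both are in place, parts (a), (c) and (d) follow from one-line averaging arguments.

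\emph{Part (a).} This is immediate. Since $\hat{L}_{\mathrm{lin}}(\eta)=g_\eta^\top(\eta-\eta_t)$ is affine in $\eta$, we can write
\[
\hat{L}_{\mathrm{lin}}(\bar\eta)=g_\eta^\top\Big(\tfrac1K\textstyle\sum_k\eta^k-\eta_t\Big)=\tfrac1K\textstyle\sum_k g_\eta^\top(\eta^k-\eta_t).
\]

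\emph{Part (b), the real content.} I would first establish the identity $\mathrm{KL}(\pi_t\|\pi_\eta)=D_A(\eta,\eta_t)$, where $D_A(\eta,\eta_t)=A(\eta)-A(\eta_t)-\nabla A(\eta_t)^\top(\eta-\eta_t)$ is the Bregman divergence of $A$.

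To derive it, write $\pi_\eta(x)=h(x)\exp(\eta^\top T(x)-A(\eta))$. Then
\[
\log\frac{\pi_t}{\pi_\eta}=(\eta_t-\eta)^\top T(x)-A(\eta_t)+A(\eta).
\]
Take the expectation under $\pi_t$ and use the standard fact $\mathbb{E}_{\pi_t}[T]=\nabla A(\eta_t)$, which comes from differentiating the normalization under the integral. This gives exactly $D_A(\eta,\eta_t)$.

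As a function of $\eta$, $D_A(\cdot,\eta_t)$ is $A$ plus an affine function. So its Hessian is $\nabla^2A(\eta)=\mathrm{Cov}_{\pi_\eta}[T]=F(\eta)\succeq0$, and it is convex on the convex natural parameter domain. Minimality even gives $F\succ0$, hence strict convexity, which is not needed here. Finite Jensen applied to the uniform weights $1/K$ then yields the stated inequality.

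I would also make explicit that no quadratic approximation enters at any point. That is the sense of ``exact'' in the statement.

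\emph{Part (c).} Here I combine (a) and (b). $J_\lambda$ is an affine function minus $\lambda>0$ times a convex function, so it is concave. Directly,
\[
J_\lambda(\bar\eta)=\tfrac1K\textstyle\sum_k\hat{L}_{\mathrm{lin}}(\eta^k)-\lambda\,\mathrm{KL}(\pi_t\|\pi_{\bar\eta})\ge\tfrac1K\textstyle\sum_k J_\lambda(\eta^k),
\]
using (b) and $\lambda>0$.

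\emph{Part (d).} The sublevel set of a convex function is convex: for $\eta,\eta'\in\mathcal{C}_\delta$, convexity bounds the KL at any convex combination by $\delta$. Alternatively, and more directly, (b) gives $\mathrm{KL}(\pi_t\|\pi_{\bar\eta})\le\frac1K\sum_k\mathrm{KL}(\pi_t\|\pi_{\eta^k})\le\delta$. Either way $\bar\eta\in\mathcal{C}_\delta$.

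\emph{Main obstacle.} The only non-routine step is the Bregman identity in (b), and specifically the interchange of differentiation and integration behind $\nabla A=\mathbb{E}[T]$ and $\nabla^2A=\mathrm{Cov}[T]$. I would justify this by noting that $\eta$ lies in the interior of the natural parameter space, where $A$ is analytic, which is standard for regular minimal exponential families. I would also note that the domain of $A$ is convex, since $\exp(A)$ is a Laplace transform and $A$ is convex on it. This ensures $\bar\eta$ is again a valid natural parameter, so $\pi_{\bar\eta}$, the LogOP consensus, is well defined.
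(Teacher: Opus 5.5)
Your proposal is correct and follows the paper's proof essentially step for step: (a) by linearity, (b) by identifying $\mathrm{KL}(\pi_t\|\pi_\eta)$ with the Bregman divergence $D_A(\eta,\eta_t)$ of the convex log-partition function and applying Jensen, (c) by combining (a) and (b), and (d) by convexity of sublevel sets. You add details the paper leaves implicit, namely the derivation of the Bregman identity via $\mathbb{E}_{\pi_t}[T]=\nabla A(\eta_t)$ and the convexity of the natural parameter domain that makes $\pi_{\bar\eta}$ well defined, but the route is the same.
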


\begin{proof}
\noindent\textbf{(a)} $\hat{L}_{\mathrm{lin}}(\eta) = g_\eta^\top(\eta - \eta_t)$ is linear in $\eta$, so $\hat{L}_{\mathrm{lin}}(\bar\eta) = \frac{1}{K}\sum_k \hat{L}_{\mathrm{lin}}(\eta_k)$.

\noindent\textbf{(b)} $\mathrm{KL}(\pi_t \| \pi_\eta) = D_A(\eta, \eta_t)$ is a Bregman divergence of the convex log-partition $A(\eta)$. Jensen's inequality gives $D_A(\bar\eta, \eta_t) \leq \frac{1}{K}\sum_k D_A(\eta_k, \eta_t)$.

\noindent\textbf{(c)} $J_\lambda(\bar\eta) = \hat{L}_{\mathrm{lin}}(\bar\eta) - \lambda\,\mathrm{KL}(\pi_t \| \pi_{\bar\eta}) = \frac{1}{K}\sum_k \hat{L}_{\mathrm{lin}}(\eta_k) - \lambda\,\mathrm{KL}(\pi_t \| \pi_{\bar\eta}) \geq \frac{1}{K}\sum_k J_\lambda(\eta_k)$, using (a) and (b).

\noindent\textbf{(d)} $\mathcal{C}_\delta = \{\eta : D_A(\eta, \eta_t) \leq \delta\}$ is convex (sublevel set of a convex function), so $\eta_k \in \mathcal{C}_\delta \;\forall k \implies \bar\eta \in \mathcal{C}_\delta$.\qedhere
\end{proof}

\begin{remark}[Connection to the Fisher decomposition]\label{rem:fisher-connection}
In parameter space, the quadratic approximation $\mathrm{KL} \approx \frac{1}{2}\Delta^\top\!F\,\Delta$ yields $J_\lambda(\bar\Delta) - \frac{1}{K}\sum_k J_\lambda(\Delta^k) = \frac{\lambda}{2}[\mathrm{Var}(c^k) + \frac{1}{K}\sum_k\|\epsilon^k - \bar\epsilon\|_F^2]$. Signal variance is empirically small, so waste reduction drives the gain. Alignment $\alpha(\bar\Delta) > \frac{1}{K}\sum_k \alpha(\Delta^k)$ follows when $\mathrm{Var}(c^k) \ll \bar{c}^2$.
\end{remark}

The following corollary summarizes the key takeaway. 

\begin{corollary}[Consensus is a better policy improvement step]\label{cor:better-step}
Let experts $\eta^1, \ldots, \eta^K$ each lie within the trust region ($\mathrm{KL}(\pi_t \| \pi_{\eta^k}) \leq \delta$). The natural-parameter consensus $\bar{\eta} = \frac{1}{K}\sum_k \eta^k$ simultaneously:
\begin{enumerate}[label=(\roman*),nosep]
    \item achieves higher KL-penalized surrogate than the mean expert: $J_\lambda(\bar{\eta}) \geq \frac{1}{K}\sum_k J_\lambda(\eta^k)$;
    \item stays within the trust region: $\mathrm{KL}(\pi_t \| \pi_{\bar\eta}) \leq \frac{1}{K}\sum_k \mathrm{KL}(\pi_t \| \pi_{\eta^k}) \leq \delta$.
\end{enumerate}
For CAPO-Avg, these guarantees hold approximately in parameter space. For CAPO, they hold exactly for the analytic log opinion pool barycenter $q$; the distilled neural policy $\pi_{t+1}$ inherits them up to distillation error.
\end{corollary}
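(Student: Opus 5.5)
The corollary is a direct repackaging of Theorem~\ref{thm:consensus}, so the plan is to read off each clause from the relevant part of the theorem and then justify the closing sentence about CAPO-Avg and the distilled policy separately.

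For (i), I will invoke part (c) of Theorem~\ref{thm:consensus} with the same $\lambda>0$. Nothing about the trust-region hypothesis is needed here. The argument is linearity of $\hat{L}_{\mathrm{lin}}$ (part (a)) combined with Jensen's inequality for the convex map $\eta\mapsto D_A(\eta,\eta_t)=\mathrm{KL}(\pi_t\|\pi_\eta)$ (part (b)). Multiplying the Jensen inequality by $-\lambda<0$ reverses it, and adding the linear term gives the claim.

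For (ii), the first inequality is exactly part (b). The second follows because each summand is at most $\delta$ by hypothesis, so their average is at most $\delta$. Equivalently, part (d) gives $\bar\eta\in\mathcal{C}_\delta$. I will note that (i) and (ii) hold for the same $\bar\eta$ simultaneously, since both are deterministic statements about one point. One small point to check is that $\bar\eta$ lies in the natural parameter space; this holds because that domain is convex.

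For the closing sentence, I will argue in two parts.

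\emph{CAPO.} By the exponential-family closure computation preceding \eqref{eq:precision-weighting}, the LogOP $q$ in \eqref{eq:logop} is, per state, exactly the distribution with natural parameter $\bar\eta(s)$. Applying (i)--(ii) statewise and averaging over states yields the guarantees for $q$, since expectations preserve the inequalities. For the distilled $\pi_{t+1}$, I would bound the discrepancy in both $\hat{L}_{\mathrm{lin}}$ and $\mathrm{KL}(\pi_t\|\cdot)$ by terms controlled by $\mathrm{KL}(q\|\pi_{t+1})$, for instance via Pinsker together with bounded advantages. The line search additionally enforces the $\delta$ constraint directly.

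\emph{CAPO-Avg.} I will use the quadratic expansion $\mathrm{KL}\approx\tfrac12\Delta^\top F\Delta$ in parameter space, where convexity of a PSD quadratic replaces convexity of $D_A$. Remark~\ref{rem:fisher-connection} then quantifies the gain, with errors of order $\|\Delta\|^3$.

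The main obstacle is making the ``approximately'' and ``up to distillation error'' statements precise. The exact parts are trivial consequences of the theorem. I would present those rigorously and state the approximate transfers as first-order or error-term statements rather than sharp bounds.
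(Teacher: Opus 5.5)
Your argument for (i) and (ii) is exactly the paper's. The paper gives the corollary no separate proof: it reads it off parts (c), (b) and (d) of Theorem~\ref{thm:consensus} together with the per-expert bound $\mathrm{KL}(\pi_t\|\pi_{\eta^k})\le\delta$, and it leaves the closing sentence about CAPO-Avg and distillation informal, so your sketch of that part goes beyond the source.

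One caution on that sketch: Pinsker with bounded advantages controls the unlinearized surrogate, not $\hat{L}_{\mathrm{lin}}$, and it cannot bound a difference of KL terms. A more natural tool is the per-state Bregman three-point identity $\mathrm{KL}(\pi_t\|\pi_{t+1})=\mathrm{KL}(\pi_t\|q)+\mathrm{KL}(q\|\pi_{t+1})+(\nabla A(\bar\eta)-\nabla A(\eta_t))^\top(\eta_{t+1}-\bar\eta)$, with $\eta_{t+1}$ the natural parameters of $\pi_{t+1}$. The cross term and the surrogate gap $g_\eta^\top(\eta_{t+1}-\bar\eta)$ can then be controlled through $\|\eta_{t+1}-\bar\eta\|$ using minimality ($F\succ 0$).
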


\section{Experiments}
\label{sec:experiments}

The results in this section show that the theoretical benefits of consensus aggregation translate to improved returns on continuous control benchmarks.

\subsection{Setup and Environments}

We implement CAPO in Gymnasium MuJoCo-v4 using $(64, 64)$ networks following CleanRL~\citep{huang2022cleanrl}, with $K\!=\!4$ experts by default (ablated in Supplementary~\ref{app:k-sweep}). We evaluate on six continuous control tasks (Table~\ref{tab:envs}), reporting mean episodic return over 20 evaluation episodes.

\begin{table}[h!]
\centering
\caption{\textbf{Evaluation environments.} Task type, state and action space dimensions. All tasks use Gymnasium MuJoCo-v4.}
\label{tab:envs}
\small
\begin{tabular}{@{}llccc@{}}
\toprule
Environment & Type & $\dim(\mathcal{S})$ & $\dim(\mathcal{A})$ \\
\midrule
Hopper & Hopping locomotion & 11 & 3 \\
HalfCheetah & Locomotion & 17 & 6 \\
Walker2d & Locomotion & 17 & 6 \\
Ant & Multi-legged locomotion & 27 & 8 \\
Humanoid & High-dim locomotion & 376 & 17 \\
HumanoidStandup & High-dim balance & 376 & 17 \\
\bottomrule
\end{tabular}
\end{table}

We compare the two CAPO aggregation operators, \textbf{CAPO} and \textbf{CAPO-Avg} against five baselines including \textbf{PPO}~\citep{schulman2017ppo} and \textbf{TRPO}~\citep{schulman2015trpo}. The other three baselines are \textbf{PPO-$K\!\times$}, which runs PPO with $KE$ epochs on the same batch, matching CAPO's total gradient cost but spending it sequentially, \textbf{Best-of-$K$}, which trains $K$ independent PPO copies and selects the one with the highest surrogate, isolating selection from averaging, and \textbf{PPO-SWA} (stochastic weight averaging~\citep{izmailov2018averaging} along the PPO trajectory).

\subsection{Main Results}
\label{sec:main-results}

\begin{table}[h!]
\centering
\caption{\textbf{Returns on Gymnasium} (mean $\pm$ SE, 8 seeds, $(64,64)$ MLP, 1M steps; Humanoid at 4M). $^\dagger$Compute-matched ($K\!\times$ gradient budget). Bold = best per task.}
\label{tab:main-results}
\small
\setlength{\tabcolsep}{3pt}
\begin{tabular}{@{}l cccccc@{}}
\toprule
\textbf{Method}
  & \textbf{Hopper}
  & \textbf{HalfCheetah}
  & \textbf{Walker2d}
  & \textbf{Ant}
  & \textbf{Humanoid}
  & \textbf{HumanoidStandup} \\
\midrule
PPO & $2711\!\pm\!268$ & $1604\!\pm\!28$ & $2518\!\pm\!280$ & $2142\!\pm\!97$ & $739\!\pm\!39$ & $143\text{k}\!\pm\!3\text{k}$ \\
PPO-$K\!\times^\dagger$ & $1210\!\pm\!304$ & $1532\!\pm\!91$ & $1432\!\pm\!337$ & $232\!\pm\!21$ & $429\!\pm\!11$ & $104\text{k}\!\pm\!5\text{k}$ \\
Best-of-$K^\dagger$ & $2255\!\pm\!274$ & $1940\!\pm\!294$ & $3770\!\pm\!341$ & $1664\!\pm\!241$ & $716\!\pm\!41$ & $145\text{k}\!\pm\!4\text{k}$ \\
PPO-SWA & $2240\!\pm\!307$ & $1261\!\pm\!431$ & $1010\!\pm\!127$ & $943\!\pm\!88$ & $750\!\pm\!55$ & $121\text{k}\!\pm\!6\text{k}$ \\
TRPO & $2583\!\pm\!300$ & $1629\!\pm\!27$ & $3441\!\pm\!303$ & $1029\!\pm\!102$ & $3730\!\pm\!339$ & $134\text{k}\!\pm\!4\text{k}$ \\
\midrule
CAPO-Avg$^\dagger$ & $\mathbf{3193\!\pm\!93}$ & $1967\!\pm\!260$ & $3223\!\pm\!354$ & $1738\!\pm\!185$ & $696\!\pm\!28$ & $141\text{k}\!\pm\!5\text{k}$ \\
\textbf{CAPO}$^\dagger$ & $2397\!\pm\!286$ & $\mathbf{2737\!\pm\!373}$ & $\mathbf{3871\!\pm\!275}$ & $\mathbf{2328\!\pm\!194}$ & $\mathbf{6367\!\pm\!196}$ & $\mathbf{149\text{k}\!\pm\!3\text{k}}$ \\
\bottomrule
\end{tabular}
\end{table}

\begin{figure}[h]
\centering
\includegraphics[width=\linewidth]{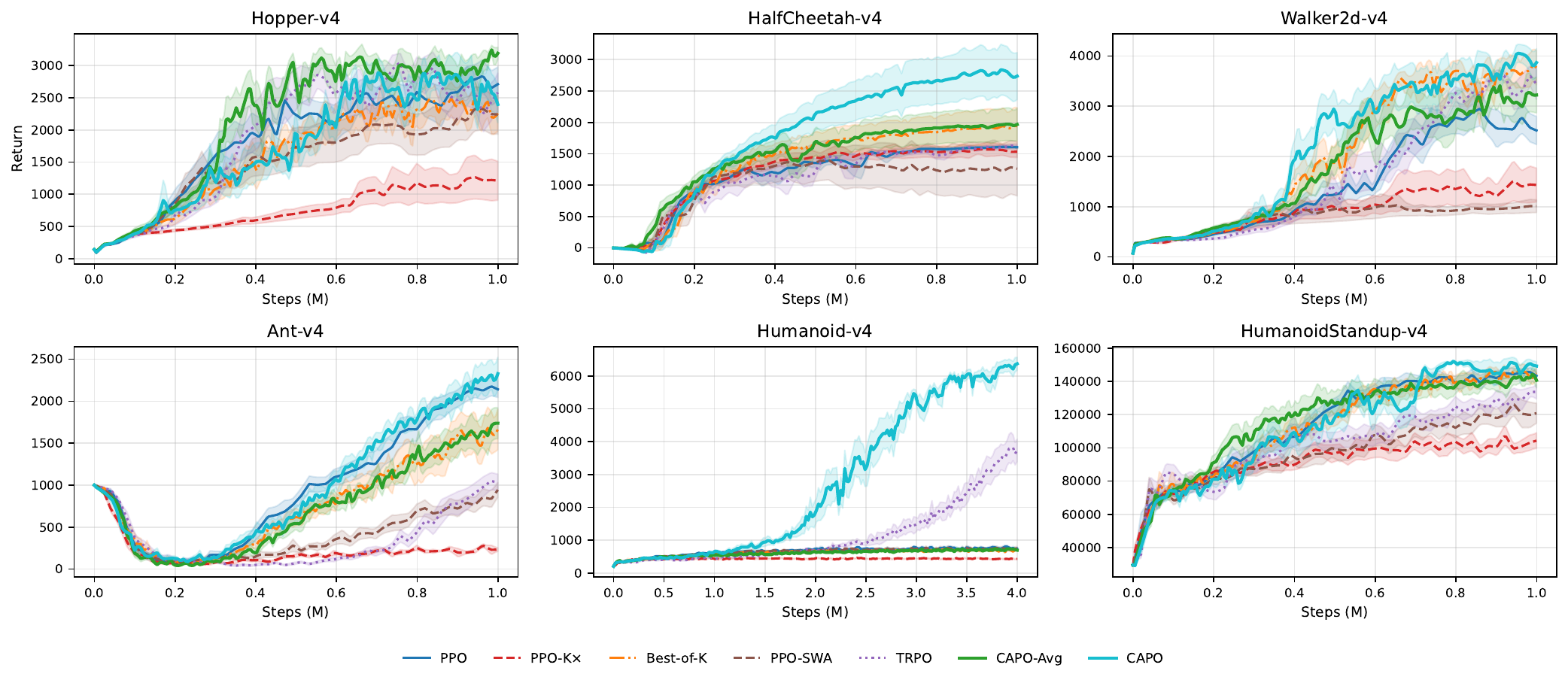}
\caption{\textbf{Gymnasium learning curves} (8 seeds, shaded $\pm 1$ SE). CAPO leads on all tasks except Hopper, which is dominated by CAPO-Avg. PPO-$K\!\times$ collapses on Ant ($7\!\times$ below PPO) and Walker2d, validating the optimization-depth dilemma.}
\label{fig:main-curves}
\end{figure}

Figure~\ref{fig:main-curves} shows learning curves for all methods; Table~\ref{tab:main-results} shows final evaluation results. CAPO beats baselines on all tasks except Hopper, which CAPO-Avg dominates. The largest gains are on HalfCheetah ($2737$, $+71\%$ over PPO) and Walker2d ($3871$, $+54\%$).

On Ant, CAPO ($2328$) exceeds PPO ($2142$). On Hopper, CAPO-Avg ($3193$) achieves the highest return while CAPO ($2397$) is comparable to PPO ($2463$); this low-dimensional task limits the benefit of precision-weighting. On Humanoid, CAPO ($6367$) achieves $8.6\!\times$ the return of PPO ($739$); TRPO ($3730$) also benefits strongly, while CAPO-Avg ($696$) remains near PPO parity, demonstrating that LogOP's per-dimension precision-weighting is beneficial in high-dimensional action spaces over longer horizons. On HumanoidStandup, CAPO outperforms all baselines.

PPO-$K\!\times$ degrades on every task, most catastrophically on Ant ($232$, $9\!\times$ below PPO), confirming the depth dilemma. PPO-SWA also degrades, confirming that trajectory averaging is insufficient. Best-of-$K$ improves over PPO on some tasks but inherits the selected expert's full waste. Wall clock overhead is modest: ${\sim}0$--$40\%$ for CAPO-Avg, $19$--$72\%$ for CAPO (Supplementary~\ref{app:timing}).

\subsection{Fisher Decomposition Diagnostics}
\label{sec:diagnostics}

We validate the empirical behavior predicted by the Fisher decomposition. 

\begin{figure}[h!]
    \includegraphics[width=\linewidth]{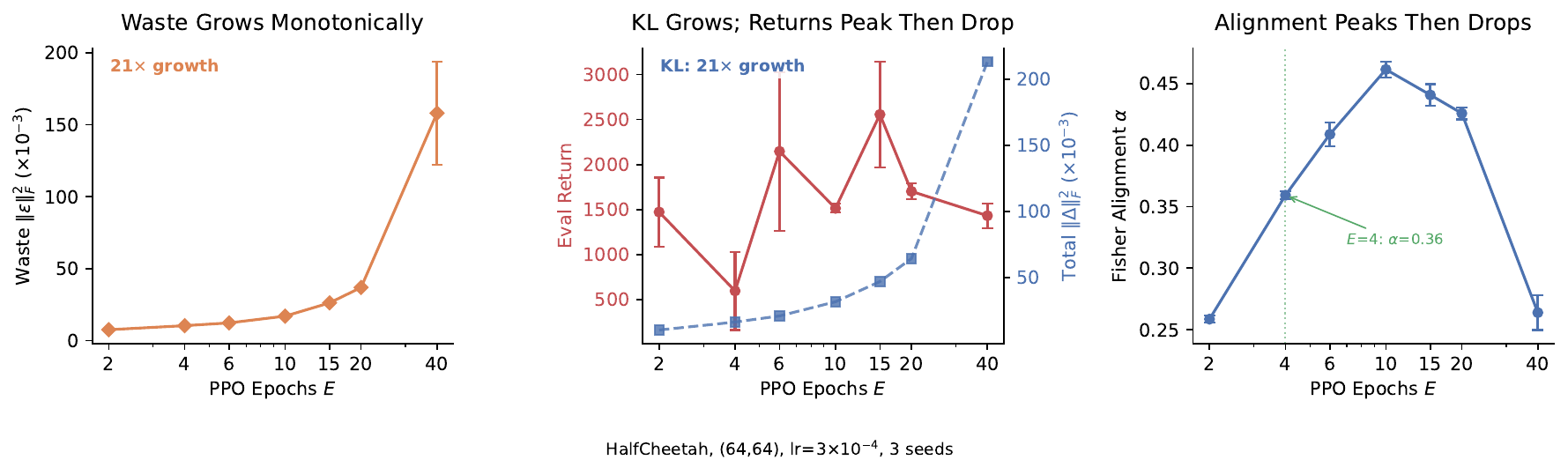}
    \caption{\textbf{Fisher diagnostics for the last PPO update vs.\ epoch count} (HalfCheetah, $(64,64)$ network, 3 seeds). Left: waste $\|\epsilon\|_F^2$ and signal $c^2$ on log scale. Center: total KL and return. Right: Fisher alignment. Waste grows $21\!\times$ from $E\!=\!2$ to $E\!=\!40$; returns peak at $E\!\approx\!6$--$15$ then decline; alignment $\alpha$ peaks at $0.46$ then drops to $0.26$.}
    \label{fig:fisher-vs-epochs}
\end{figure}

\textbf{(i) KL waste dominates as number of epochs grow}: Section~\ref{sec:depth-dilemma} showed this on Hopper (Table~\ref{tab:signal-saturation-body}); Figure~\ref{fig:fisher-vs-epochs} confirms the pattern on HalfCheetah. At $E\!=\!4$, waste accounts for $64\%$ of the total KL update ($\alpha = 0.36$). Waste grows $21\!\times$ from $E\!=\!2$ to $E\!=\!40$ without plateauing; signal also grows but slower. Fisher alignment $\alpha$ peaks at $E\!\approx\!10$ ($\alpha = 0.46$) then drops to $0.26$ at $E\!=\!40$; the returns peak early ($E\!\approx\!6$--$15$) and decline as total KL overshoots the trust region.

\textbf{(ii) Waste is less correlated than signal} (Table~\ref{tab:signal-waste-corr}): signal correlation $\rho_c > 0.99$ while waste correlation $\rho_\epsilon \in [0.77, 0.97]$. 

\textbf{(iii) Averaging directly reduces waste} (Table~\ref{tab:waste-reduction-direct}): CAPO-Avg reduces waste by $2$--$17\%$, with the largest reduction on Humanoid. On low-dimensional tasks, LogOP amplifies waste ($5$--$9\!\times$) but improves returns nevertheless. On high-dimensional tasks (Ant, Humanoid) it reduces waste, confirming that LogOP's gains come from precision weighting. 

Full tables and methodology for (ii) and (iii) in Supplementary~\ref{app:fisher-diagnostics}. 

\subsection{Ablations}
\label{sec:ablations}

We ablate four design choices in Supplementary~\ref{app:ablations}: \textbf{optimizer state carry} (Section~\ref{sec:opt-carry}), \textbf{clip threshold $\epsilon$} (Section~\ref{app:clip}), KL early stopping (Section~\ref{app:kl-early-stop}), and \textbf{warm-up budget} (Section~\ref{sec:warmup-ablation}); and \textbf{width scaling} (Section~\ref{app:k-sweep}), varying $K$ at fixed epoch count $E$. The key findings are:

\begin{enumerate}[nosep]
  \item \textbf{Width scales; depth does not.} Increasing $K$ at fixed $E$ improves returns monotonically (CAPO) or up to $K\!=\!8$ (CAPO-Avg), whereas increasing $E$ past ${\sim}10$ degrades all methods.
  \item \textbf{CAPO is robust to trust region parameters.} CAPO maintains strong performance across a range of clipping and KL early stopping thresholds. PPO's own performance is sensitive to these choices (e.g., dropping $3\!\times$ on Walker2d when $\epsilon$ deviates from $0.2$), while CAPO tolerates a wider operating range.
  \item \textbf{Warmup and optimizer carry are secondary.} Performance varies modestly across warmup fractions ($0$--$20\%$) and carry modes; the default $10\%$ warmup with incumbent carry is a reasonable choice across tasks.
\end{enumerate}

\section{Related Work}
\label{sec:related}

CAPO draws on several research threads, primarily weight averaging in optimization, and ensemble and population-based methods in RL.

\paragraph{Weight averaging in optimization.}
SWA~\citep{izmailov2018averaging}, Lookahead~\citep{zhang2019lookahead}, and \citet{nikishin2018improving} average along a single optimization trajectory; CAPO averages across $K$ parallel trajectories, which cancels path-dependent waste that temporal averaging preserves (PPO-SWA degrades in performance, as shown in Table~\ref{tab:main-results}). SWAP~\citep{gupta2020stochastic} averages parallel workers on the same data with different minibatch orderings, the closest supervised-learning analog to CAPO, but lacks trust region structure or Fisher-geometric analysis. Model soups~\citep{wortsman2022model}, WARP~\citep{rame2024warm}, and Rewarded Soups~\citep{rame2023rewarded} average across hyperparameters or reward models; CAPO averages across optimization paths from the \emph{same} hyperparameters and objective, isolating optimizer noise as the diversity source. One-shot averaging~\citep{zinkevich2010parallelized}, local SGD~\citep{stich2019local}, and DiLoCo~\citep{douillard2023diloco} average parallel workers with heterogeneous data; CAPO operates on homogeneous data where all diversity comes from the optimizer, making the signal--waste decomposition tractable.

\paragraph{Ensemble and population methods in RL.}
EPPO~\citep{yang2022eppo} maintains $K$ policies at inference time ($K\!\times$ inference cost); CAPO aggregates at optimization time and deploys a single policy ($1\!\times$ inference). CEM-RL~\citep{pourchot2019cemrl}, ERL~\citep{khadka2018evolutionary}, and EPO~\citep{mustafaoglu2025epo} combine evolutionary perturbations with gradients; CAPO requires no random perturbations, deriving all diversity from stochastic optimization paths. \citet{qiu2025es} scale ES to multi-billion-parameter LLM fine-tuning via many simple perturbation steps; CAPO takes fewer, gradient-informed steps and exploits curvature via the natural gradient. SVPG~\citep{liu2017stein} maintains diverse particles via repulsive kernels; CAPO aggregates into a single consensus without explicit repulsion.

\section{Discussion and Future Work}
\label{sec:discussion}

CAPO exploits the structure of optimizer noise in trust region optimization: signal saturates early while waste accumulates with additional epochs, so averaging parallel copies cancels waste while preserving signal. The principle generalizes: in any setting where iterative refinement on fixed data produces diminishing-return noise, width-first strategies may offer better variance-bias tradeoffs.

\paragraph{Practical tips for practitioners.}  For PPO, returns peak at an epoch count of 10, and degrade significantly by $E$=20. Implementing KL early stopping (ablated in~\ref{sec:ablations}) is recommended. $K$=8 yields the best returns, but $K$=4 was used in our experiments to handle the efficiency/performance trade-off. 

\paragraph{Limitations.}
$K\!\times$ gradient cost is the primary overhead, though expert training is embarrassingly parallel and wall-clock time scales sublinearly with $K$. All experiments use continuous control with Gaussian policies; the theory applies to exponential families generally, but we have no empirical evidence for discrete actions, vision, or language tasks.

\paragraph{Future work.}
Future work could explore principled methods for generating diverse experts for increased waste reduction.
We also explored richer diversity sources beyond minibatch ordering, such as Bayesian bootstrap reweighting of advantages and diverse GAE lambda values, which increase expert disagreement but introduce estimator heterogeneity not covered by Theorem~\ref{thm:consensus}. The signal--waste decomposition suggests CAPO could improve LLM fine-tuning, where  optimizer noise compounds across long sequences. 

\subsubsection*{Broader Impact Statement}
\label{sec:broaderImpact}
CAPO reduces costly environment interactions, potentially accelerating robotics and simulation-based RL research.

\appendix


\bibliography{main}
\bibliographystyle{rlj}

\beginSupplementaryMaterials

\section{Fisher Decomposition Diagnostics}
\label{app:fisher-diagnostics}

This appendix extends the empirical analysis of Sections~\ref{sec:depth-dilemma} and~\ref{sec:diagnostics}, and presents full Fisher decomposition diagnostics: signal--waste correlation, and direct waste reduction measurement. 

\subsection{Signal/Waste Correlation}
\label{app:correlation}

Table~\ref{tab:signal-waste-corr} breaks down the signal and waste correlation at the final generation for CAPO-Avg.

\begin{table}[h!]
\centering
\caption{\textbf{Signal vs.\ waste correlation at the final generation} ($K\!=\!4$), CAPO-Avg. Gymnasium, $(64,64)$ network, 5 seeds.}
\label{tab:signal-waste-corr}
\small
\begin{tabular}{@{}lccccc@{}}
\toprule
\textbf{Environment}
  & $\rho_c$ (signal)
  & $\rho_\epsilon$ (waste)
  & $\rho_{\mathrm{total}}$
  & $\bar{c}/\bar{c}_k$
  \\
\midrule
HalfCheetah & 0.999 & 0.974 & 0.976 & 0.999 \\
Walker2d & 0.999 & 0.961 & 0.966 & 0.999  \\
Hopper & 0.993 & 0.934 & 0.939 & 0.995 \\
Ant & 1.000 & 0.969 & 0.973 & 1.000 \\
Humanoid & 1.000 & 0.774 & 0.837 & 1.000 \\
\bottomrule
\end{tabular}
\end{table}

Waste correlation $\rho_\epsilon$ ranges from $0.77$ (Humanoid) to $0.97$ (HalfCheetah).

\subsection{Waste Reduction}
\label{app:waste-reduction}

The correlation analysis above predicts waste cancellation; we now measure it directly. We decompose the consensus update $\Delta_{\mathrm{cons}}$ into signal and waste using the same Fisher projection, and report the ratio $\|\epsilon_{\mathrm{cons}}\|_F^2 \big/ \frac{1}{K}\sum_k\|\epsilon_k\|_F^2$ in Table~\ref{tab:waste-reduction-direct}. A ratio below 1 means the consensus has less waste than the average expert; above 1 means waste was amplified.

\begin{table}[h]
\centering
\caption{\textbf{Direct waste reduction: consensus vs.\ PPO} ($K\!=\!4$, $E=10$). Ratio $\|\bar\epsilon\|_F^2 / \frac{1}{K}\sum_k\|\epsilon_k\|_F^2$ measures how much waste the consensus retains relative to PPO. Values ${<}\,1$: waste reduced; values ${>}\,1$: waste amplified. $d_s$: observation dimension; $d_a$: action dimension. Gymnasium, $(64,64)$ network, 5 seeds.}
\label{tab:waste-reduction-direct}
\small
\begin{tabular}{@{}llccccc@{}}
\toprule
\textbf{Environment} & \textbf{Method} & $d_s$ & $d_a$
  & $\frac{1}{K}\!\sum_k\!\|\epsilon_k\|_F^2$
  & $\|\bar\epsilon\|_F^2$
  & \textbf{Ratio} \\
\midrule
Hopper & CAPO-Avg  & 11 & 3 & $7.8\!\times\!10^{-4}$ & $7.4\!\times\!10^{-4}$ & $0.950 \pm 0.015$ \\
Hopper & CAPO      & 11 & 3 & $1.3\!\times\!10^{-3}$ & $6.5\!\times\!10^{-3}$ & $5.68 \pm 1.31$ \\
\addlinespace
HalfCheetah & CAPO-Avg & 17 & 6 & $2.7\!\times\!10^{-3}$ & $2.7\!\times\!10^{-3}$ & $0.980 \pm 0.003$ \\
HalfCheetah & CAPO     & 17 & 6 & $3.5\!\times\!10^{-3}$ & $2.4\!\times\!10^{-2}$ & $9.27 \pm 2.46$ \\
\addlinespace
Walker2d & CAPO-Avg & 17 & 6 & $1.1\!\times\!10^{-3}$ & $1.1\!\times\!10^{-3}$ & $0.971 \pm 0.005$ \\
Walker2d & CAPO     & 17 & 6 & $9.7\!\times\!10^{-4}$ & $7.6\!\times\!10^{-3}$ & $8.37 \pm 1.13$ \\
\addlinespace
Ant & CAPO-Avg  & 27 & 8 & $8.8\!\times\!10^{-2}$ & $8.0\!\times\!10^{-2}$ & $0.914 \pm 0.001$ \\
Ant & CAPO      & 27 & 8 & $6.9\!\times\!10^{-2}$ & $3.7\!\times\!10^{-2}$ & $\mathbf{0.668 \pm 0.004}$ \\
\addlinespace
Humanoid  & CAPO-Avg  & 376 & 17 & $0.342$ & $0.285$ & $\mathbf{0.832 \pm 0.002}$ \\
Humanoid  & CAPO      & 376 & 17 & $0.058$ & $0.032$ & $\mathbf{0.559 \pm 0.022}$ \\
\bottomrule
\end{tabular}
\end{table}

CAPO-Avg reduces waste on every environment, with reduction scaling with parameter count: $2\%$ on HalfCheetah/Walker2d ($n_\theta\!=\!17$), $5\%$ on Hopper ($n_\theta\!=\!11$), and $17\%$ on Humanoid ($n_\theta\!=\!376$). This directly confirms the theoretical prediction that averaging cancels waste while preserving signal.

CAPO (LogOP) demonstrates a striking pattern: it amplifies waste $5$--$10\!\times$ on low-dimensional tasks ($n_\theta \leq 17$), but achieves $33\%$ waste reduction on Ant ($d_s\!=\!27$) and $46\%$ waste reduction on Humanoid ($n_\theta\!=\!376$). This illustrates how the benefits from precision-weighting scale with the dimensionality of the task.

\section{Experimental Details}
\label{app:details}

This section presents all hyperparameters used in the experiments, and reports a comparison of wall-clock time for all methods.

\subsection{Hyperparameters}
\label{app:hyperparameters}

\subsubsection{CAPO Hyperparameters}

\begin{center}
\begin{tabular}{@{}lll@{}}
\toprule
Parameter & Default & Description \\
\midrule
$K$ (n\_experts) & 4 (see Section~\ref{app:k-sweep}) & Number of parallel expert policies per generation \\
Carry policy & incumbent & Optimizer state carryover between generations \\
\bottomrule
\end{tabular}
\end{center}

\subsubsection{PPO Hyperparameters}

\begin{center}
\small
\begin{tabular}{@{}lll@{}}
\toprule
Parameter & Value & Description \\
\midrule
Hidden layers & $(64, 64)$ & Policy and value network \\
Activation & tanh & Hidden layer activation \\
Initialization & Orthogonal & Weight init method \\
Std parameterization & clip & Log-std to std conversion \\
Learning rate & $3\!\times\!10^{-4}$ & Adam LR (policy and value) \\
Adam $\epsilon$ & $10^{-5}$ & Adam epsilon \\
LR annealing & linear $\to 0$ & Over total optimization steps \\
Discount $\gamma$ & 0.99 & Return discount factor \\
GAE $\lambda$ & 0.95 & Generalized advantage estimation \\
Clip $\epsilon$ & 0.2 & PPO clipping threshold \\
Entropy coef. & 0.0 & Entropy bonus coefficient \\
VF coef. & 0.5 & Value loss coefficient \\
Max grad norm & 0.5 & Global gradient norm clipping \\
PPO epochs $E$ & 10 & Gradient passes per rollout \\
Minibatches & 32 & Per epoch \\
Rollout horizon $H$ & 512 & Steps per environment \\
Parallel envs & 8 & Training environments \\
\bottomrule
\end{tabular}
\end{center}

\subsubsection{Base Optimizer Hyperparameters}

\begin{center}
\begin{tabular}{@{}lll@{}}
\toprule
Parameter & Default & Description \\
\midrule
base\_optimizer & ppo & $\in \{\text{ppo}, \text{trpo}\}$ \\
trpo\_damping & 0.1 & Fisher-vector product damping for CG (TRPO only) \\
trpo\_cg\_iters & 10 & Max conjugate gradient iterations (TRPO only) \\
\bottomrule
\end{tabular}
\end{center}

\subsection{Wall-Clock Timing}
\label{app:timing}

APO's $K\!\times$ gradient cost is the primary overhead. Table~\ref{tab:wall-clock} reports wall-clock times on Gymnasium. CAPO-Avg adds ${\sim}0$--$40\%$ wall-clock overhead despite $4\!\times$ gradient cost, because CAPO's fewer generations amortize overhead (JIT recompilation, env resets, evaluation). CAPO adds $19$--$72\%$ due to the distillation step. Notably, PPO-$K\!\times$ is often \emph{slower} ($21$--$65\%$ overhead) because its sequential epochs cannot be parallelized. 

\begin{table}[h]
\centering
\caption{Wall-clock time on Gymnasium, 8 seeds. PPO row: absolute seconds (mean $\pm$ SE). Other rows: ratio to PPO ($\times$). Five tasks at 1M steps; Humanoid at 4M$^\dagger$.}
\label{tab:wall-clock}
\scriptsize
\setlength{\tabcolsep}{3pt}
\begin{tabular}{@{}lcccccc@{}}
\toprule
& \textbf{HalfCheetah} & \textbf{Walker2d} & \textbf{Ant} & \textbf{Hopper} & \textbf{Humanoid}$^\dagger$ & \textbf{HumanoidStandup} \\
\midrule
PPO & $1331\!\pm\!155$ & $1376\!\pm\!179$ & $1563\!\pm\!247$ & $1027\!\pm\!8$ & $6756\!\pm\!837$ & $1629\!\pm\!82$ \\
PPO-$K\!\times$ & 1.37$\times$ & 1.36$\times$ & 1.38$\times$ & 1.41$\times$ & 1.21$\times$ & 1.65$\times$ \\
TRPO & 0.98$\times$ & 0.99$\times$ & 1.03$\times$ & 1.01$\times$ & 1.02$\times$ & 0.95$\times$ \\
PPO-SWA & 0.97$\times$ & 0.99$\times$ & 1.04$\times$ & 1.00$\times$ & 0.92$\times$ & 1.06$\times$ \\
Best-of-$K$ & 1.36$\times$ & 1.31$\times$ & 1.18$\times$ & 1.67$\times$ & 1.06$\times$ & 1.01$\times$ \\
CAPO-Avg & 1.12$\times$ & 1.16$\times$ & 0.96$\times$ & 1.40$\times$ & 0.99$\times$ & 1.00$\times$ \\
CAPO & 1.54$\times$ & 1.72$\times$ & 1.39$\times$ & 1.42$\times$ & 1.19$\times$ & 1.28$\times$ \\
\bottomrule
\end{tabular}
\end{table}

\section{Ablations and Scaling}
\label{app:ablations}

\subsection{Optimizer State Carry}
\label{sec:opt-carry}

\begin{table}[h!]
\centering
\caption{Optimizer state carry ablation: CAPO-Avg and CAPO across three Gymnasium environments (1M steps, 3 seeds, mean $\pm$ SE). Bold indicates the best carry mode per row.}
\label{tab:opt-carry}
\small
\setlength{\tabcolsep}{4pt}
\begin{tabular}{@{}llcccc@{}}
\toprule
\textbf{Method} & \textbf{Env} & \textbf{Incumbent} & \textbf{Reset} & \textbf{Best Expert} & \textbf{Average} \\
\midrule\multirow{3}{*}{CAPO-Avg}
  & Hopper   & $2836 \pm 521$ & $\mathbf{2991 \pm 260}$ & $2513 \pm 268$ & $2868 \pm 415$ \\
  & Walker2d & $3823 \pm 338$ & $2689 \pm 934$ & $\mathbf{4066 \pm 140}$ & $3836 \pm 238$ \\
  & Ant      & $1647 \pm 337$ & $1517 \pm 234$ & $\mathbf{2580 \pm 266}$ & $2534 \pm 240$ \\
\midrule
\multirow{3}{*}{CAPO}
  & Hopper   & $2041 \pm 530$ & $\mathbf{2697 \pm 578}$ & $2640 \pm 831$ & $1757 \pm 182$ \\
  & Walker2d & $\mathbf{3312 \pm 1052}$ & $3153 \pm 738$ & $2924 \pm 1072$ & $3197 \pm 592$ \\
  & Ant      & $\mathbf{2496 \pm 407}$ & $2016 \pm 257$ & $2328 \pm 90$ & $2229 \pm 34$ \\
\bottomrule
\end{tabular}
\end{table}

\textbf{No carry mode dominates consistently} (Table~\ref{tab:opt-carry}). Each generation, $K$ experts train independently from a shared Adam state, building up their own momentum over $E$ epochs. In \emph{incumbent} mode, expert optimizer states are discarded after aggregation; the pre-generation state is kept with only the step counter advanced so the learning rate schedule decays correctly. Since the momentum and second-moment buffers are never updated between generations, incumbent carry is effectively equivalent to \emph{reset} (both start each generation with zero momentum). The two modes differ only in that reset re-initializes the optimizer object, and indeed they perform comparably across all six settings. \emph{Best-expert} carry inherits one expert's Adam momentum, creating a potential mismatch when the next generation trains from the \emph{consensus} parameters; surprisingly, it performs well for CAPO-Avg (best on Walker2d and Ant), suggesting that the winning expert's curvature estimate transfers usefully under parameter averaging. \emph{Average} carry, which averages second moments across divergent gradient histories, is competitive for CAPO-Avg but consistently weak for CAPO (LogOP). Overall, the choice of carry mode is secondary to the aggregation method itself; we retain incumbent as the default for simplicity.

\subsection{Width Scaling: Varying $K$}
\label{app:k-sweep}

The epoch sweep (Section~\ref{sec:depth-dilemma}) shows that deeper optimization hits diminishing returns. The complementary question is: does wider optimization (more experts $K$ at fixed epoch count $E$) continue to improve? We run CAPO-Avg with $K \in \{2, 4, 8, 16\}$ at $E\!=\!10$ on Gymnasium, so that total gradient cost scales linearly with $K$.

\begin{table}[h]
\centering
\caption{Width scaling: CAPO and CAPO-Avg with varying $K$ at fixed $E\!=\!10$. Gymnasium, 1M steps, 3 seeds, mean $\pm$ SE.}
\label{tab:k-sweep}
\scriptsize
\setlength{\tabcolsep}{3pt}
\begin{tabular}{@{}lcccc@{}}
\toprule
 & $K\!=\!2$ & $K\!=\!4$ & $K\!=\!8$ & $K\!=\!16$ \\
\midrule
\multicolumn{5}{@{}l@{}}{\emph{HalfCheetah}} \\
CAPO     & $2491 \pm 824$ & $2382 \pm 765$ & $3022 \pm 668$ & $\mathbf{3556 \pm 965}$ \\
CAPO-Avg & $2021 \pm 319$ & $2510 \pm 728$ & $\mathbf{3116 \pm 562}$ & $1746 \pm 52$ \\
\midrule
\multicolumn{5}{@{}l@{}}{\emph{Walker2d}} \\
CAPO     & $3549 \pm 530$ & $2587 \pm 176$ & $\mathbf{4056 \pm 93}$ & $3585 \pm 558$ \\
CAPO-Avg & $3361 \pm 164$ & $3609 \pm 239$ & $\mathbf{4279 \pm 344}$ & $3523 \pm 34$ \\
\midrule
\multicolumn{5}{@{}l@{}}{\emph{Ant}} \\
CAPO     & $2346 \pm 310$ & $1939 \pm 96$ & $2356 \pm 189$ & $\mathbf{2801 \pm 227}$ \\
CAPO-Avg & $1675 \pm 341$ & $1392 \pm 71$ & $\mathbf{1840 \pm 252}$ & $1564 \pm 64$ \\
\bottomrule
\end{tabular}
\end{table}

Both methods are stable across $K$. On Walker2d, CAPO-Avg peaks at $K\!=\!8$ ($4279$) and remains above the PPO baseline at all widths; CAPO peaks at $K\!=\!8$ as well ($4056$). On HalfCheetah, CAPO-Avg peaks at $K\!=\!8$ ($3116$) but drops at $K\!=\!16$ ($1746$), while CAPO improves monotonically up to $K\!=\!16$ ($3556$). On Ant, CAPO scales well with width ($2801$ at $K\!=\!16$), while CAPO-Avg peaks at $K\!=\!8$ ($1840$). In all cases, width scaling does not degrade performance, in contrast to the catastrophic effect of increasing $E$ past $10$.

\subsubsection{Clip Threshold $\epsilon$}
\label{app:clip}

Table~\ref{tab:clip-eps-ablation} varies the PPO clip threshold.

\begin{table}[h]
\centering
\caption{Clip threshold ablation ($\epsilon \in \{0.1, 0.2, 0.3, 0.5\}$). Gymnasium, 1M steps, 3 seeds, mean $\pm$ SE.}
\label{tab:clip-eps-ablation}
\small
\begin{tabular}{@{}lcccc@{}}
\toprule
 & $\epsilon\!=\!0.1$ & $\epsilon\!=\!0.2$ & $\epsilon\!=\!0.3$ & $\epsilon\!=\!0.5$ \\
\midrule
\multicolumn{5}{@{}l@{}}{\emph{HalfCheetah}} \\
PPO & $\mathbf{1726 \pm 126}$ & $1568 \pm 196$ & $1165 \pm 38$ & $1068 \pm 91$ \\
CAPO-Avg & $1560 \pm 17$ & $\mathbf{3176 \pm 959}$ & $1631 \pm 45$ & $1809 \pm 213$ \\
CAPO & $1760 \pm 151$ & $\mathbf{2374 \pm 480}$ & $2035 \pm 415$ & $2067 \pm 767$ \\
\midrule
\multicolumn{5}{@{}l@{}}{\emph{Walker2d}} \\
PPO & $\mathbf{1544 \pm 479}$ & $932 \pm 223$ & $894 \pm 90$ & $462 \pm 34$ \\
CAPO-Avg & $2539 \pm 727$ & $2626 \pm 159$ & $\mathbf{3011 \pm 819}$ & $2890 \pm 423$ \\
CAPO & $865 \pm 79$ & $3850 \pm 611$ & $\mathbf{4001 \pm 404}$ & $2759 \pm 838$ \\
\midrule
\multicolumn{5}{@{}l@{}}{\emph{Ant}} \\
PPO & $\mathbf{503 \pm 78}$ & $316 \pm 47$ & $201 \pm 72$ & $68 \pm 16$ \\
CAPO-Avg & $990 \pm 40$ & $1588 \pm 187$ & $\mathbf{1930 \pm 426}$ & $1163 \pm 396$ \\
CAPO & $849 \pm 163$ & $2212 \pm 9$ & $2323 \pm 45$ & $\mathbf{2593 \pm 398}$ \\
\bottomrule
\end{tabular}
\end{table}

PPO degrades monotonically as $\epsilon$ increases on all three environments: from $1726$ to $1068$ on HalfCheetah, $1544$ to $462$ on Walker2d, and $503$ to $68$ on Ant.  Larger $\epsilon$ permits larger policy ratio deviations per epoch, compounding overshoot across multiple epochs.  CAPO-Avg and CAPO (LogOP) are substantially more robust: CAPO-Avg stays above $1560$ on HalfCheetah and above $2539$ on Walker2d across the full range.  On Ant, CAPO (LogOP) actually improves with $\epsilon$ ($849 \to 2593$), suggesting that the consensus mechanism can exploit larger per-expert updates when precision-weighting neutralizes the noise they introduce.

\subsubsection{KL Early Stopping}
\label{app:kl-early-stop}

Table~\ref{tab:target-kl-ablation} tests KL early stopping, which caps the number of epochs per iteration when a KL threshold is exceeded.

\begin{table}[h]
\centering
\caption{KL early stopping ablation ($\mathrm{target\_kl} \in \{0, 0.02, 0.04, 0.08\}$). Gymnasium, 1M steps, 3 seeds, mean $\pm$ SE.}
\label{tab:target-kl-ablation}
\small
\begin{tabular}{@{}lcccc@{}}
\toprule
 & $\mathrm{tkl}\!=\!0$ & $\mathrm{tkl}\!=\!0.02$ & $\mathrm{tkl}\!=\!0.04$ & $\mathrm{tkl}\!=\!0.08$ \\
\midrule
\multicolumn{5}{@{}l@{}}{\emph{HalfCheetah}} \\
PPO & $1475 \pm 114$ & $\mathbf{2922 \pm 756}$ & $2166 \pm 346$ & $1355 \pm 56$ \\
CAPO-Avg & $2522 \pm 992$ & $1579 \pm 30$ & $\mathbf{2532 \pm 814}$ & $1943 \pm 399$ \\
CAPO & $2274 \pm 676$ & $\mathbf{2361 \pm 821}$ & $1967 \pm 282$ & $2307 \pm 664$ \\
\midrule
\multicolumn{5}{@{}l@{}}{\emph{Walker2d}} \\
PPO & $664 \pm 94$ & $\mathbf{3281 \pm 339}$ & $1204 \pm 188$ & $1454 \pm 254$ \\
CAPO-Avg & $3160 \pm 1100$ & $\mathbf{4408 \pm 242}$ & $3955 \pm 189$ & $2490 \pm 776$ \\
CAPO & $3444 \pm 714$ & $3371 \pm 675$ & $3345 \pm 505$ & $\mathbf{3680 \pm 578}$ \\
\midrule
\multicolumn{5}{@{}l@{}}{\emph{Ant}} \\
PPO & $295 \pm 78$ & $\mathbf{2142 \pm 28}$ & $823 \pm 89$ & $406 \pm 34$ \\
CAPO-Avg & $1552 \pm 258$ & $2367 \pm 123$ & $2224 \pm 346$ & $1701 \pm 187$ \\
CAPO & $\mathbf{1851 \pm 203}$ & $\mathbf{2676 \pm 344}$ & $\mathbf{2643 \pm 104}$ & $\mathbf{2304 \pm 197}$ \\
\bottomrule
\end{tabular}
\end{table}

PPO on Ant is extremely sensitive to the KL threshold: it achieves $2142$ only at $\mathrm{tkl}\!=\!0.02$ and collapses to $295$--$823$ at all other settings, a ${\sim}7\!\times$ performance range.  CAPO-Avg and CAPO (LogOP) are far more robust, staying above $1550$ and $1851$ respectively across all settings.  On HalfCheetah and Walker2d, PPO likewise performs best at $\mathrm{tkl}\!=\!0.02$ (the only setting that prevents catastrophic overshoot without being overly conservative), while both CAPO variants maintain competitive performance across the full range.  Combined with the clip threshold results, this confirms that consensus aggregation stabilizes PPO against hyperparameter sensitivity.

\subsubsection{Warmup Budget}
\label{sec:warmup-ablation}

Table~\ref{tab:warmup-ablation} varies the fraction of the total step budget allocated to warmup PPO before expert averaging begins.

\begin{table}[h]
\centering
\caption{Warmup budget ablation. Fraction of total budget allocated to warmup PPO before expert averaging begins. Gymnasium, 1M steps, 5 seeds, mean $\pm$ SE.}
\label{tab:warmup-ablation}
\small
\begin{tabular}{@{}lcccc@{}}
\toprule
 & 0\% & 5\% & 10\% & 20\% \\
\midrule
\multicolumn{5}{@{}l@{}}{\emph{HalfCheetah}} \\
CAPO-Avg & $1710 \pm 36$ & $2443 \pm 417$ & $2469 \pm 537$ & $\mathbf{2706 \pm 473}$ \\
CAPO & $2203 \pm 496$ & $\mathbf{2512 \pm 535}$ & $2190 \pm 482$ & $2163 \pm 517$ \\
\midrule
\multicolumn{5}{@{}l@{}}{\emph{Walker2d}} \\
CAPO-Avg & $3695 \pm 521$ & $3193 \pm 550$ & $\mathbf{4089 \pm 226}$ & $3539 \pm 493$ \\
CAPO & $3455 \pm 658$ & $3394 \pm 484$ & $3430 \pm 278$ & $\mathbf{4263 \pm 405}$ \\
\midrule
\multicolumn{5}{@{}l@{}}{\emph{Ant}} \\
CAPO-Avg & $1623 \pm 77$ & $1417 \pm 180$ & $1658 \pm 161$ & $\mathbf{1696 \pm 369}$ \\
CAPO & $2231 \pm 65$ & $2194 \pm 46$ & $2267 \pm 196$ & $\mathbf{2384 \pm 147}$ \\
\bottomrule
\end{tabular}
\end{table}

The optimal warmup fraction is task-dependent. For CAPO-Avg, $10\%$ is best on Walker2d ($4089$) while $20\%$ is best on HalfCheetah ($2706$) and Ant ($1696$). CAPO (LogOP) consistently outperforms CAPO-Avg across warmup fractions and environments, peaking at $5\%$ on HalfCheetah ($2512$), $20\%$ on Walker2d ($4263$) and Ant ($2384$). The default $10\%$ used in the main results (Table~\ref{tab:main-results}) is a reasonable middle ground that avoids the worst-case on any environment for both methods.

\end{document}